\journal{Journal of \LaTeX\ Templates}
\newcommand{\bx}{\mathbf{x}}
\newcommand{\bh}{\mathbf{h}}
\newcommand{\bX}{\mathbf{X}}
\newcommand{\bB}{\mathbf{B}}
\newcommand{\bK}{\mathbf{K}}
\newcommand{\bD}{\mathbf{D}}
\newcommand{\bW}{\mathbf{W}}
\newcommand{\bV}{\mathbf{V}}
\newcommand{\bU}{\mathbf{U}}
\newcommand{\bI}{\mathbf{I}}
\newcommand{\bZ}{\mathbf{Z}}
\newcommand{\bS}{\mathbf{S}}
\newcommand{\bM}{\mathbf{M}}
\newcommand{\by}{\mathbf{y}}
\newcommand{\bY}{\mathbf{Y}}
\newcommand{\bu}{\mathbf{u}}
\newtheorem{prop}{Proposition}[section] 
\mathchardef\mhyphen="2D
\begin{document}

\begin{frontmatter}

\title{Spectral Clustering via Ensemble Deep Autoencoder Learning (SC-EDAE)}


\author[mymainaddress]{S\'everine Affeldt\corref{mycorrespondingauthor}}
\cortext[mycorrespondingauthor]{Corresponding author}
\ead{severine.affeldt@parisdescartes.fr}

\author[mymainaddress]{Lazhar Labiod}
\ead{lazhar.labiod@parisdescartes.fr}

\author[mymainaddress]{Mohamed Nadif}
\ead{mohamed.nadif@mi.parisdescartes.fr}

\address[mymainaddress]{University of Paris Descartes,\\ Mathematics and Computer Science,\\ 45 rue des Saints P\`eres,\\ 75006 Paris, France}

\begin{abstract}
Recently, a number of works have studied clustering strategies that combine classical clustering algorithms  and deep learning methods. These approaches follow
either a sequential way, where a deep representation is learned using a deep autoencoder before obtaining clusters with k-means, or a
simultaneous way, where deep representation and clusters are learned jointly by optimizing a single objective function. Both strategies improve clustering performance, however the robustness of these approaches is impeded by several deep autoencoder setting issues, among which the weights initialization, the width and number of layers or the number of epochs. To alleviate the impact of such hyperparameters setting on the clustering performance, we propose a new model which combines the spectral clustering and deep autoencoder strengths in an ensemble learning framework. Extensive experiments on various benchmark datasets demonstrate the potential and robustness of our approach compared to state-of-the-art deep clustering methods.
\end{abstract}

\begin{keyword}
spectral clustering\sep unsupervised ensemble learning\sep autoencoder
\end{keyword}

\end{frontmatter}


\section{Introduction}

Learning from large amount of data is a very challenging task. 
Several dimensionality reduction and clustering techniques that are well studied in the literature aim to learn a suitable and simplified data representation from original dataset; see for instance \cite{Yamamoto_14,AllabLN17,AllabLN18}.
While many approaches have been proposed to address the dimensionality reduction and clustering tasks, deep learning-based methods recently demonstrate promising results. Motivated by the keen interest in deep learning, many authors tackle the objective of data representation and partitioning using jointly the autoencoders~\cite{Hinton06} and clustering approaches. 
\subsection{Deep Autoencoder: challenges and issues}
Deep learning is a machine learning method that works with multi-level learning of data representations~\cite{bengio2009learning} where one passes from low level features to higher level features through the different layers. These deep architectures can automatically learn important features from images, sound or text data and have made significant progress in the field of computer vision. The autoencoder (AE) algorithm and its deep version (DAE), like the traditional methods of dimensionality reduction, has been a great success in recent years.

An autoencoder~\cite{Hinton06,Baldi12,BengioYAV13} is a neural network which is trained to replicate its input at its output. Training an autoencoder is unsupervised in the sense that no labeled data is needed. The training process is still based on the optimization of a cost function. Autoencoders can be used as tools to train deep neural networks~\cite{bengio2007greedy}.

For the purpose of dimensionality reduction, an autoencoder can learn a representation (or {\it encoding}) for a set of data. If linear activations are used, or only a single sigmoid hidden layer, then the optimal solution to an autoencoder is strongly related to Principal Component Analysis (PCA). With appropriate dimensionality and sparsity constraints, autoencoders can learn data projections that are more interesting than other basic techniques such as PCA which only allows linear transformation of data vectors.
By contrast, the autoencoders are non-linear by nature, and can learn more complex relations between visible and hidden units. Moreover, they can be stacked, which makes them even more powerful.

Recently, a number of works have studied clustering strategies that combine classical clustering algorithms and deep learning methods. These approaches follow either a sequential way, where a deep representation is learned using a deep autoencoder before obtaining clusters using a  clustering technique (e.g. \emph{k-means}) ~\cite{ShaoLDF15,TianGCCL14,WangHWW14,HuangHWW14,Leyli-AbadiLN17,YangCHWWZ16,BanijamaliG17,WangDF17,XieGF16}, or a simultaneous way, where deep representation and clusters are learned jointly by optimizing a single objective function~\cite{YangFSH17,Tian2017,Yang2016}. Both strategies improve clustering performance. However, when dealing with real-world data, existing clustering algorithms based on deep autoencoders suffer from different issues which impede their robustness and ease-to-use, such as,

\begin{itemize}
\setlength\itemsep{-0.25pt}
\item {\bf the weights initialization}, as mentioned in \cite{SeuretALI17}, the training of a Deep Neural Network (DNN) still suffers from two major drawbacks, among which the weights initialization. Indeed, initializing the weights with random values clearly adds randomness to the obtained results. The DNN pretraining~\cite{erhan2010does}, which is strongly related to the initialization issue, has been used in an increasing number of studies~\cite{YangFSH17,guo2017improved,xie2016unsupervised}. While pretraining helps to improve clustering performance, it is usually computationally intensive and thus raises supplementary training issues.
\item {\bf the architecture (or {\it structure})}, the architecture ({\it i.e.}, number of layers and their width) forces the network to seek a
different
representation of the data while preserving the 
important information. However, we observe that in almost all recent papers on deep clustering \cite{YangFSH17,Tian2017,Yang2016,BanijamaliG17,WangDF17,XieGF16,Ji2017}, a different structure is recommended by the authors for each studied dataset. 
In some studies, the DAE architecture can even lack of technical rationales. Most importantly, the clustering performance of the proposed methods usually strongly depends on a particular DAE structure.
\end{itemize}
\subsection{Our paper's contribution and structure}
To address the above mentioned challenging issues, we propose a 
Spectral Clustering via Ensemble Deep Autoencoder's algorithm ({\tt SC-EDAE}) 
which combines the advantages and strengths of spectral clustering, deep embedding models and ensemble paradigm. 
Ensemble learning has been considered in different machine learning context where it generally helps in improving results by combining several models. The ensemble approach allows a better predictive performance and a more robust clustering as compared to the results obtained with a single model. 
Following the ensemble paradigm, we first used several DAE with different hyperparameters settings to generate $m$ encodings. In a second step, each encoding is projected in a higher features space based on the {\it anchors} strategy \cite{Liu:2010,Chen11LandmarkSpectral} to construct $m$ graph affinity matrices. Finally, we apply spectral clustering on an ensemble graph affinity matrix to have the common space shared by all the $m$ encodings, before we run \emph{k-means} in this common subspace to produce the final clustering (see Fig.~\ref{fig:DAE-LSC_scheme} for a summary diagram).

The outline of the paper is as follows. In Section~\ref{sec:related_work} we present the related work. In Section~\ref{sec:preliminaries}, some notations and preliminaries are given. In Section~\ref{sec:SC-EDAE}, we present and discuss our approach in full details. In Section~\ref{sec:experiments}, the evaluations of the proposed method and comparisons with several related approaches available in the literature are presented. The conclusion of the paper is given in Section~\ref{sec:conclusion}.

\section{Related Work}
\label{sec:related_work}
Despite their success, most existing clustering methods are severely challenged by the data generated 
with modern applications, which are typically high-dimensional, noisy, heterogeneous and sparse. This has driven many researchers to investigate new clustering models to overcome these difficulties. One promising category of such models relies on data embedding.

Within this framework, classical dimensionality reduction approaches, e.g., Principal Component Analysis (PCA), have been widely considered for the embedding task. However, the linear nature of such techniques makes it challenging to infer faithful representations of real-world data, which typically lie on highly non-linear manifolds. This motivates the investigation of deep learning models (e.g., autoencoders, convolutional neural networks), which have been shown so far to be successful in extracting highly non-linear features from complex data, such as text, images or graphs~\citep{Hinton06,Baldi12,BengioYAV13}.

The deep autoencoders (DAE) have proven to be useful for dimensionality reduction~\cite{Hinton06} and image denoising. In particular, the autoencoders (AE) can non-linearly transform data into a latent space. When this latent space has lower dimension than the original one~\citep{Hinton06}, this can be viewed as a form of non-linear PCA. An autoencoder typically consists of an {\it encoder} stage, that can provide an encoding of the original data in lower dimension, and a {\it decoder} part, to define the data reconstruction cost. 
In clustering context, the general idea is to embed the 
data into a low dimensional latent space and then perform clustering in this new space. The goal of the embedding here is to learn new representations of the objects of interest (e.g., images) that encode only the most relevant information characterizing the original data, which would for example reduce noise and sparsity.

Several interesting works have recently combined embedding learning and clustering. The  proposed methods generally conduct both clustering and deep embedding in two different ways. First, some works proposed to combine deep embedding and clustering in a sequential way. In~\cite{TianGCCL14} the authors use a stacked autoencoder to learn a representation of the affinity graph, and then run \emph{k-means} on the learned representations to obtain the clusters. In~\cite{xie2016unsupervised}, it has been proposed to train a deep network by iteratively minimizing a Kullback-Leibler (KL) divergence between a centroid based probability distribution and an auxiliary target distribution. 

More recently, in~\cite{Guo:2017} the authors propose to incorporate an autoencoder into the Deep Embedded Clustering (DEC) framework~\citep{xie2016unsupervised}. Then, the proposed framework can jointly perform clustering and learn representative features with local structure preservation. A novel non-linear reconstruction method which adopt deep neural networks for representation based community detection has been proposed in~\citep{Yang2016}. The work presented in \citep{Ji2017} combines deep learning with subspace clustering such that the network is designed to directly learn the affinities matrix. Finally, a novel algorithm was introduced in~\citep{BanijamaliG17} that uses {\it landmarks} and deep autoencoders, 
to perform efficient spectral clustering.

Since the embedding process is not guaranteed to infer representations that are suitable for the clustering task, several authors recommend to perform both tasks jointly so as to let clustering govern feature extraction and vice-versa.
In \citep{Tian2017}, the authors propose a general framework, so-called {\it DeepCluster}, to integrate the traditional clustering methods into deep learning models and adopt Alternating Direction of Multiplier Method to optimize it. In \citep{YangFSH17}, a joint dimensionality reduction and \emph{k-means} clustering approach in which dimensionality reduction is accomplished via learning a deep neural network is proposed.

Beyond the joint and sequential ways to combine clustering and deep embedding, it appears that the connection between autoencoder and ensemble learning paradigm has not been explored yet. In this paper, we aim to fill the gap between ensemble deep autoencoders and spectral clustering in order to propose a robust approach that takes simultaneously advantage of several deep models with various hyperparameter settings.
In particular, we apply spectral clustering 
on an ensemble of {\it fused} encodings obtained from $m$ different deep autoencoders.
To our knowledge, the adoption of deep learning in an ensemble learning paradigm has not been adequately investigated yet. The goal of this work is to conduct investigations along this direction.

\section{Preliminaries}
\label{sec:preliminaries}
\subsection{Notation}
Throughout the paper, we use bold uppercase characters to denote matrices, bold lowercase characters to denote vectors. For any matrix $\mathbf{M}$, $\mathbf{m}_j$ denotes the $j$-th column vector of $\mathbf{M}$, $\mathbf{y}_i$ means the $i$-th row vector of $\mathbf{Y}$, $m_{ij}$ denotes the $(i,j)-$ element of $\mathbf{M}$ and $Tr[\mathbf{M}]$ is the trace of $\mathbf{M}$ whether $\mathbf{M}$ is a square matrix; $\mathbf{M}^{\top}$ denotes the transpose matrix of $\mathbf{M}$. We consider the Frobenius norm of a matrix $\mathbf{M} \in \mathbb{R}^{n \times d}$:  $||\mathbf{M}||^2=\sum_{i=1}^{n}\sum_{j=1}^{d}m_{ij}^2=Tr[\mathbf{M}^{\top}\mathbf{M}]$. 
Furthermore, let $\mathbf{I}$ be the identity matrix with appropriate size.

\subsection{Spectral clustering}
Spectral clustering is a popular clustering method that uses eigenvectors of a symmetric matrix derived from the distance between datapoints. Several algorithms have been proposed in the literature~\citep{verma2003comparison,von2007tutorial}, each using the eigenvectors in slightly different ways~\citep{shi2000normalized,ng2002spectral,meila2001learning}. The partition of the $n$ datapoints of $\bX ~\in \mathbb{R}^{n \times d}$ into 
$k$ 
disjoint clusters is based on an objective function that favors low similarity between clusters and high similarity within clusters. In its normalized version, the spectral clustering algorithm exploits the top $k$ eigenvectors of the normalized graph Laplacian $\mathbf{L}$ that are the relaxations of the indicator vectors which provide assignments of each datapoint to a cluster. In particular, it amounts to maximize the following relaxed normalized association,
\begin{equation}\label{eqSC}
  \max\limits_{\bB \in \mathbb{R}^{n\times k}} Tr(\bB^{\top} \bS \bB)\quad s.t. \quad \bB^{\top} \bB=\bI
\end{equation}
with $\bS = \bD^{-1/2} \bK \bD^{-1/2} ~\in \mathbb{R}^{n \times n}$ is the normalized similarity matrix where $\bK~\in \mathbb{R}^{n \times n}$ is the similarity matrix and $~D \in \mathbb{R}^{n \times n}$ is the diagonal matrix whose $(i,i)$-element of $\bX$ is the sum of $\bX$'s $i$-th row. The solution of (\ref{eqSC}) is to set the matrix $\bB~\in \mathbb{R}^{n \times k}$ equal to the $k$ eigenvectors corresponding to the largest $k$ eigenvalues of $\bS$. After renormalization of each row of $\bB$, a \emph{k-means} assigns each datapoint $\bx_i$ of $\bX$ to the cluster that the row $\mathbf{b}_i$ of $\bB$ is assigned to.

As opposed to several other clustering algorithms (e.g. \emph{k-means}), spectral clustering performs well on arbitrary shaped clusters. However, a limitation of this method is the difficulty to handle large-scale datasets due to the high complexity of the graph Laplacian construction and the eigendecomposition.

Recently, a scalable spectral clustering approach, referred to as {\it Landmark-based Spectral Clustering} (\emph{LSC})~\citep{chen2011large} or {\it AnchorGraph}~\citep{Liu:2010}, has been proposed. This approach allows to efficiently construct the graph Laplacian and compute the eigendecomposition. Specifically, each datapoint is represented by a linear combination of $p$ representative datapoints (or {\it landmarks}), with $p \ll n$. The obtained representation matrix $\Hat{\bZ} \in \mathbb{R}^{p\times n}$, for which the affinity is calculated between $n$ datapoints and the $p$ landmarks, is sparse which in turn ensures a more efficient eigendecomposition as compare to the above mentioned eigendecomposition of $\bS$ (Eq.~\ref{eqSC}).

\subsection{Deep autoencoders}

An autoencoder~\citep{hinton1994autoencoders} is a neural network that implements an unsupervised learning algorithm in which the parameters are learned in such a way that the output values tend to copy the input training sample. The internal hidden layer of an autoencoder can be used to represent the input in a lower dimensional space by capturing the most salient features.

Specifically, we can decompose an autoencoder in two parts, namely an {\it encoder}, $f_\theta$, followed by a {\it decoder}, $g_\psi$. The first part allows the computation of a feature vector $\mathbf{y}_i = f_\theta(\mathbf{x}_i)$ for each input training sample, thus providing the {\it encoding} $\mathbf{Y}$ of the input dataset. The {\it decoder} part aims at transforming back the encoding into its original representation, $\hat{\mathbf{x}}_i = g_\psi(\mathbf{y}_i)$. 

The sets of parameters for the encoder $f_\theta$ and the decoder $g_\psi$ are learned simultaneously during the reconstruction task while minimizing the {\it  loss}, referred to as $\mathcal{J}$, where $\cal{L}$ is a cost function for measuring the divergence between the input training sample and the reconstructed data,
\begin{equation}\label{eqDAEloss}
  \mathcal{J}_{AE}(\theta, \psi)=\sum\limits_{i=1}^n {\cal{L}}(\bx_i, g_\psi(f_\theta(\bx_i))).
\end{equation}
The encoder and decoder parts can have several shallow layers, yielding a deep autoencoder (DAE) that enables to learn higher order features. The network architecture of these two parts usually mirrors each other.

It is remarkable that PCA can be interpreted as a linear AE with a single layer \citep{Hinton06}. In particular,  PCA can be seen as a linear autoencoder with $\bW \in \mathbb{R}^{d \times k}$ where $k \le d$.  Taking $f_\theta(\bX)=\bX\bW$ and $g_\psi \circ f_\theta(\bX)=\bX \bW\bW^{\top}$ we find the objective function $|| \bX - \bX \bW\bW^{\top}||^{2}$ optimized by PCA.

\section{ Spectral Clustering via Ensemble DAE}
\label{sec:SC-EDAE}

\subsection{Problem formulation}
Given an $n \times d$ data matrix $\bX$, the goal is to first obtain a set of $m$ encodings $\{\bY_\ell\}_{\ell \in [1,m]}$ using $m$ DAE trained with different hyperparameters settings. In a second step, we construct a graph matrix $\bS_\ell$ associated to each embedding $\bY_\ell$, and then {\it fuse} the $m$ graph matrices in an ensemble graph matrix $\bS$ which contains information provided by the $m$ embeddings. Finally, to 
benefit from the common subspace shared by the $m$ deep embeddings, spectral clustering is applied to $\bS$. The challenges of the problem  are threefold,

\begin{enumerate}
    \item generate $m$ deep embeddings,
    \item integrate the clustering in an ensemble learning framework,
    \item solve the clustering task in a highly efficient way.
\end{enumerate}

Each of the above mentioned issues is discussed in the separate subsections~\ref{ssec:deep_embed_generation}, \ref{ssec:graph_mat_construction} and \ref{ssec:ens_affin_mat} respectively. Most importantly, the {\tt SC-EDAE} approach is provided with an ensemble optimization which is detailed in subsection~\ref{ssec:optimization}.

\subsection{Deep embeddings generation}
\label{ssec:deep_embed_generation}
The cost function of an autoencoder, with an encoder $f_\theta$ and a decoder $g_\psi$, measures the error between the input $\bx \in \mathbb{R}^{d\times 1}$ and its reconstruction at the output $\hat{\bx} \in \mathbb{R}^{d\times 1}$. 
The encoder $f_\theta$ and decoder $g_\psi$ can have multiple layers of different widths. To generate $m$ deep representations or encodings $\{\bY_\ell\}_{\ell \in [1,m]}$, the DAE is trained with different hyperparameter settings (e.g., initialization, layer widths) by optimizing the following cost function. 
\begin{equation}\label{eqDEA}
  || \bX - g_{\psi_\ell} \circ f_{\theta_\ell}(\bX)||^{2}  
\end{equation}
where $g_{\psi_\ell}$ and $f_{\theta_\ell}$ are learned with the hyperparameter setting $\ell$, and $\bY_\ell = f_{\theta_\ell}(\bX)$ (Fig.~\ref{fig:DAE-LSC_scheme}, $(a)$).

\subsection{Graph matrix construction}
\label{ssec:graph_mat_construction}
To construct the 
graph matrix $\bS_\ell$, we use an idea similar to that of {\it Landmark Spectral Clustering}~\cite{Chen11LandmarkSpectral} and  the {\it Anchor-Graphs}~\cite{Liu:2010}, where a smaller and sparser
representation matrix $\bZ_\ell \in \mathbb{R}^{n \times p}$ that approximates a full $n \times n$ affinity matrix is built between the landmarks $\{\bu_j^\ell\}_{j \in [1,p]}$ and the encoded points $\{\by_i^\ell\}_{i \in [1,n]}$ (Fig.~\ref{fig:DAE-LSC_scheme}, $(a)$). Specifically, a 
set of $p$ points ($p \ll n$) are obtained through a $k$-means clustering on the embedding matrix $\bY_\ell$. 
These points are the landmarks
which approximate the neighborhood structure. Then a non-linear mapping from data to landmark is computed as follows,
\begin{equation}\label{eqanchor}
  z_{ij}^\ell = \Phi(\by_i^\ell)=\frac{\mathcal{K}(\by_i^\ell,\bu_j^\ell)}{\sum_{j' \in N_{(i)} }\mathcal{K}(\by_i^\ell,\bu_{j'}^\ell)} 
\end{equation}
where $N_{(i)}$ indicates the $r$ ($r<p$) nearest landmarks around $\by_i^\ell$. As proposed in~\cite{Chen11LandmarkSpectral}, we set $z_{ij}^\ell$ to zero when the landmark $\bu_j^\ell$ is not among the nearest neighbor of $\by_i^\ell$, leading to a sparse affinity matrix $\bZ_\ell$. The function $\mathcal{K}(.)$ is used to measure the similarity between data $\by_i^\ell$  and anchor $\bu_j^\ell$ with $L_2 $ distance in Gaussian kernel space $\mathcal{K}(\bx_i, \bx_j) = \exp(-||\bx_i -\bx_j||^2/2\sigma^2)$, and $\sigma$ is the bandwidth parameter. The normalized matrix $\hat{\bZ}_\ell \in \mathbb{R}^{n \times p}$ is then utilized to obtain a low-rank 
graph matrix,  
\begin{eqnarray*}
\bS_\ell \in \mathbb{R}^{n\times n}, \quad
\mathbf{S}_\ell = {\bZ}_\ell \Sigma^{-1}{\bZ}_\ell^{\top} \mbox{ where } \Sigma = diag({\bZ}_\ell^{\top}\mathbbm{1}).
\end{eqnarray*}
As the $\Sigma^{-1}$  normalizes the constructed matrix, $\mathbf{S}_\ell$ is bi-stochastic, {\it i.e.} the summation of each column and row equal to one, and the graph Laplacian becomes, 
\begin{equation}
  \bS_\ell = \hat{\bZ}_\ell\hat{\bZ}_\ell^{\top} \quad \mbox{where}~~ \hat{\bZ}_\ell=\bZ_\ell \Sigma^{-1/2}.
\end{equation}

\subsection{Ensemble of affinity matrices}
\label{ssec:ens_affin_mat}
Given a set of $m$ 
encodings $\{\bY_\ell\}_{\ell \in [1, m]}$ obtained using  $m$ DAE trained with different hyperparameters setting $\ell$, the goal is to merge the $m$ 
graph similarity matrices $\bS_\ell$ in an ensemble similarity matrix which contains information provided by the $m$ embeddings. To  aggregate the different similarity matrices, we use an Ensemble Clustering idea analogous to that proposed in~\cite{Strehl:2003,vega2011survey}
where a {\it co-association} matrix is first built as the summation of all basic similarity matrices, and where each basic partition matrix can be represented as a block diagonal matrix. Thus, the {\tt SC-EDAE} ensemble affinity matrix is built as the summation of the $m$ basics similarity matrices using the following formula,
\begin{equation}\label{eqSbar}
\bar{\bS}=\frac{1}{m}\sum_{\ell=1}^{m}\bS_\ell.
\end{equation}
Note that the obtained matrix $\bar{\bS}$ is bi-stochastic, as ${\bS}_\ell$ (Eq.~\ref{eqSbar}). For many natural problems, $\bar{\bS}$ is approximately block stochastic matrix, and hence the first $k$ eigenvectors of $\bar{\bS}$ are approximately piecewise constant over the $k$ almost invariant rows subsets~\cite{Maila-2001}. 

In the sequel, we aim to compute, at lower cost, $\bB$ that is shared by the $m$ graph matrices $\bS_\ell$, and obtained by optimizing the following trace maximization problem
\begin{equation}\label{Sbar}
  \max_{\bB} Tr(\bB^{\top} \bar{\bS} \bB) \quad s.t. \quad \bB^{\top} \bB=\bI.
\end{equation}

\subsection{Proposed optimization and algorithm}
\label{ssec:optimization}

The solution of Eq.~\ref{Sbar} is to set the matrix $\bB$ equal to the $k$ eigenvectors corresponding to the largest $k$ eigenvalues of $\bar{\bS}$. 
However, as the computation of the eigendecomposition of  $\bar{\bS}$ of size $(n \times n)$ is $O(n^3)$, relying on proposition \ref{propo1}, we propose instead to compute the $k$ left singular vectors of the concatenated matrix,
\begin{equation}\label{eqzbar}
\bar{\bZ}=\frac{1}{\sqrt{m}}[ \hat{\bZ}_1| \ldots| \hat{\bZ}_j| \ldots |\hat{\bZ}_m].
\end{equation}
Using the sparse matrix $\bar{\bZ}$ $\in \mathbb{R}^{n \times \sum_{j=1}^{m} \ell_{j}}$ with $\sum_{j=1}^{m} \ell_{j}\ll n$, instead of $\bar{\bS}$, which has a larger dimension, naturally induces an improvement in the computational cost of $\bB$ (Fig.~\ref{fig:DAE-LSC_scheme}, $(b)$).

\begin{prop}\label{propo1}
Given a set of $m$ similarity matrices $\bS_\ell$, such that each matrix $\bS_\ell$ can be expressed as $\bZ_\ell\bZ_\ell^{\top}$. Let $\bar{\bZ} \in \mathbb{R}^{n \times \sum_{j=1}^{m} \ell_{j}}$, where 
$\sum_{j=1}^{m} \ell_{j} \ll n$, denoted as $\frac{1}{\sqrt{m}}[ \bZ_1| \ldots | \bZ_j| \ldots |\bZ_m]$, be the concatenation of the $\bZ_\ell$'s,   $\ell=1,\ldots,m$. We first have,
\begin{equation}\label{equiveq}
\max_{\bB^{\top}\bB=\mathbf{I}}Tr(\bB^{\top} \bar{\bS} \bB) \Leftrightarrow \min_{\bB^{\top}\bB=\mathbf{I},\bM}||\bar{\bZ}-\bB \bM^{\top}||_F^2.
\end{equation}
Then, given ${\tt SVD}(\bar{\bZ})$, $\bar{\bZ}=\bU\Sigma\bV^{\top}$ and the optimal solution $\bB^{*}$ is equal to $\bU$.

\end{prop}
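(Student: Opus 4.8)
The plan is to establish the two claims of the proposition in sequence: first the equivalence \eqref{equiveq} between the trace-maximization and the low-rank reconstruction problems, and then the identification of the optimal $\bB^{*}$ with the left singular vectors $\bU$ of $\bar{\bZ}$. The starting observation is that by construction $\bar{\bS} = \bar{\bZ}\,\bar{\bZ}^{\top}$, since $\bar{\bS} = \frac{1}{m}\sum_{\ell=1}^m \bZ_\ell\bZ_\ell^{\top}$ and $\bar{\bZ} = \frac{1}{\sqrt m}[\bZ_1|\dots|\bZ_m]$ gives exactly $\bar{\bZ}\,\bar{\bZ}^{\top} = \frac{1}{m}\sum_{\ell}\bZ_\ell\bZ_\ell^{\top}$. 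So the whole statement reduces to a standard fact about PCA/low-rank approximation applied to the matrix $\bar{\bZ}$.

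For the equivalence, I would expand the Frobenius objective on the right-hand side:
\begin{equation*}
||\bar{\bZ} - \bB\bM^{\top}||_F^2 = Tr(\bar{\bZ}^{\top}\bar{\bZ}) - 2\,Tr(\bM\bB^{\top}\bar{\bZ}) + Tr(\bM\bB^{\top}\bB\bM^{\top}).
\end{equation*}
Using the constraint $\bB^{\top}\bB = \bI$, the last term is $Tr(\bM\bM^{\top}) = ||\bM||_F^2$. Minimizing first over $\bM$ with $\bB$ fixed is an unconstrained least-squares problem whose solution is $\bM^{\top} = \bB^{\top}\bar{\bZ}$, i.e. $\bM = \bar{\bZ}^{\top}\bB$. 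Substituting back collapses the expression to $Tr(\bar{\bZ}^{\top}\bar{\bZ}) - Tr(\bB^{\top}\bar{\bZ}\,\bar{\bZ}^{\top}\bB) = Tr(\bar{\bZ}^{\top}\bar{\bZ}) - Tr(\bB^{\top}\bar{\bS}\bB)$. Since $Tr(\bar{\bZ}^{\top}\bar{\bZ})$ is a constant independent of $\bB$, minimizing the left side over $\bB$ (with $\bB^{\top}\bB=\bI$) is exactly maximizing $Tr(\bB^{\top}\bar{\bS}\bB)$ over the same constraint set, which is \eqref{equiveq}.

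For the identification of the optimizer, I would invoke the spectral characterization: the maximum of $Tr(\bB^{\top}\bar{\bS}\bB)$ over $\bB^{\top}\bB=\bI$ with $\bB\in\mathbb{R}^{n\times k}$ is attained by taking $\bB$ to be $k$ orthonormal eigenvectors of $\bar{\bS}$ corresponding to its $k$ largest eigenvalues (Ky Fan's theorem, as already used after Eq.~\eqref{eqSC}). Now write the SVD $\bar{\bZ} = \bU\Sigma\bV^{\top}$; then $\bar{\bS} = \bar{\bZ}\,\bar{\bZ}^{\top} = \bU\Sigma^2\bU^{\top}$ is the eigendecomposition of $\bar{\bS}$, so the columns of $\bU$ are eigenvectors of $\bar{\bS}$ with eigenvalues $\sigma_i^2$ sorted in decreasing order. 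Hence the top-$k$ block of $\bU$ furnishes an optimal $\bB^{*}$, which is what the proposition asserts (modulo the usual non-uniqueness under rotations within an eigenspace and sign flips). I would close by noting that the corresponding optimal $\bM$ is $\bM^{*} = \bar{\bZ}^{\top}\bB^{*} = \bV\Sigma\bU^{\top}\bU_{[:,1:k]} = \bV_{[:,1:k]}\Sigma_{k}$, making the reconstruction $\bB^{*}\bM^{*\top}$ the rank-$k$ truncated SVD of $\bar{\bZ}$, consistent with Eckart–Young.

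The only mild subtlety — not really an obstacle — is bookkeeping around dimensions and the constant term: one must be careful that the minimization over $\bM$ is genuinely unconstrained (it is, $\bM\in\mathbb{R}^{(\sum_j \ell_j)\times k}$ is free) so that the least-squares step is valid, and that the "$\Leftrightarrow$" is read as "the two constrained problems have the same set of optimal $\bB$", not as an equality of objective values. Everything else is the standard PCA-as-autoencoder computation already foreshadowed in the Preliminaries, so no new machinery is needed.
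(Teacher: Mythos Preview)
Your proposal is correct and follows essentially the same approach as the paper: minimize over $\bM$ to obtain $\bM^{*}=\bar{\bZ}^{\top}\bB$, substitute back to reduce the reconstruction problem to the trace maximization, and then use the SVD of $\bar{\bZ}$ to identify the eigendecomposition $\bar{\bS}=\bU\Sigma^{2}\bU^{\top}$ and hence $\bB^{*}=\bU$. Your version is slightly more explicit (expanding the Frobenius norm, naming Ky Fan and Eckart--Young, and flagging the non-uniqueness caveat), but the argument is the same.
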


\begin{proof}
From the second term of Eq.~\ref{equiveq}, one can easily show that $\bM^{*}=\bar{\bZ}^{\top}\bB$. Plugging now the expression of $\bM^{*}$ in Eq.~\ref{equiveq}, the following equivalences hold
\begin{eqnarray*}
\min_{\bB^{\top}\bB=\mathbf{I},\bM}||\bar{\bZ}-\bB \bM^{\top}||_F^2 &\Leftrightarrow& \min_{\bB^{\top}\bB=\mathbf{I}}||\bar{\bZ}-\bB\bB^{\top} \bar{\bZ}||_F^2\\
&\Leftrightarrow& \max_{\bB^{\top}\bB=\mathbf{I}}Tr(\bB^{\top} \bar{\bZ}\bar{\bZ}^{\top} \bB)\\
&\Leftrightarrow& \max_{\bB^{\top}\bB=\mathbf{I}}Tr(\bB^{\top} \bar{\bS}\bB).
\end{eqnarray*}
On the other hand, ${\tt SVD} (\bar{\bZ})$ leads to $\bar{\bZ}=\bU \Sigma \bV^{\top}$  (with $\bU^{\top}\bU=\bI$, $\bV^{\top}\bV=\bI$)  and therefore to the eigendecomposition of $\bar{\bS}$ as follows: 
\begin{eqnarray*}
\bar{\bS}=\bar{\bZ}\bar{\bZ}^{\top}&=&(\bU \Sigma \bV^{\top})(\bU \Sigma \bV^{\top})^\top\nonumber\\
&=&\bU \Sigma (\bV^{\top}\bV) \Sigma \bU^{\top}\nonumber\\
&=&\bU \Sigma^2 \bU^{\top}.
\end{eqnarray*}
Thereby the left singular vectors of $\bar{\bZ}$ are the same as the eigenvectors of $\bar{\bS}$.
\end{proof}

The steps of our {\tt SC-EDAE} algorithm 
are summarized in Algorithm \ref{Alg_sc-edae} and illustrated by Figure~\ref{fig:DAE-LSC_scheme}. 
The {\tt SC-EDAE} approach proposes a unique way to combine DAE encodings with clustering. It also directly benefits from the 
low complexity of the \emph{anchors} strategy for both the graph affinity matrix construction and the eigendecomposition.

Specifically, the computational cost for the construction of each $\bZ_\ell$ affinity matrix amounts to $\mathcal{O}(np_{\ell}e(t+1))$ (Alg.~\ref{Alg_sc-edae}, step (b)) , where $n$ is the number of datapoints, $p_\ell$ is the number of landmarks for the $\ell^{th}$ DAE ($p_\ell \ll n$), $e$ is the size of the DAE encoding $\bY_\ell$ ($e \ll n$) and $t$ is the number of iterations for the {\it k-means} that is used to select the landmarks. The computation of the $\bZ_\ell$ matrices can be easily parallelized over multiple cores, leading to an efficient computation of the ensemble affinity matrix $\bar{\bZ}$.
Furthermore, the eigendecomposition of the sparse ensemble affinity matrix $\bar{\bZ}$, which leads to the $\bB$ embeddings (Alg.~\ref{Alg_sc-edae}, step (c)), induces a computational complexity of $\mathcal{O}(p'^3+p'2n)$, where $p'$ is the sum of all landmarks numbers for the concatenated $\bZ_\ell$ matrices, {\it i.e.} $p'=\sum_{j=1}^{m} \ell_{j}\ll n$. Finally, we need additional $\mathcal{O}(nctk)$ for the last {\it k-means} on $\bB \in \mathbbm{R}^{n \times k}$ (Alg.~\ref{Alg_sc-edae}, output), where $c$ is the number of centro\"ids, usually equal to $k$ the number of eigenvectors, leading to $\mathcal{O}(ntk^2)$.
%
%
\begin{algorithm}[ht!]
\caption{: {\tt SC-EDAE} algorithm}
\label{Alg_sc-edae}
\begin{algorithmic}
\STATE {\bfseries Input:}  data matrix $ \mathbf{X}$;
\STATE {\bfseries Initialize:} $m$ DAE with different hyperparameters setting; \\
%
\STATE {\bfseries Do:} \\
\STATE
 (a) Generate $m$ deep embedding $\{\bY_\ell\}_{l \in [1, m]}$ \hfill {\footnotesize{(\it Eq.~\ref{eqDEA})}}\\
 (b) Construct the ensemble sparse affinity matrix  $ \mathbf{\bar{Z}}$ $\in \mathbb{R}^{n \times \sum_{j=1}^{m} \ell_{j}}$ \hfill {\footnotesize{\it (Eq.~\ref{eqanchor}, \ref{eqzbar})}}\\
 (c) Compute $ \mathbf{B^* \in \mathbbm{R}^{n \times k}}$ by  performing {\it sparse} {\tt SVD} on  $\mathbf{\bar{Z}}$ \hfill {\footnotesize{\it (Eq.~\ref{equiveq})}} \\
\STATE {\bfseries Output:}   Run $k$-means on $\mathbf{B^*}$ to get the final clustering
\end{algorithmic}
\end{algorithm}
\begin{figure*}[ht!]
\includegraphics[scale = 0.52]{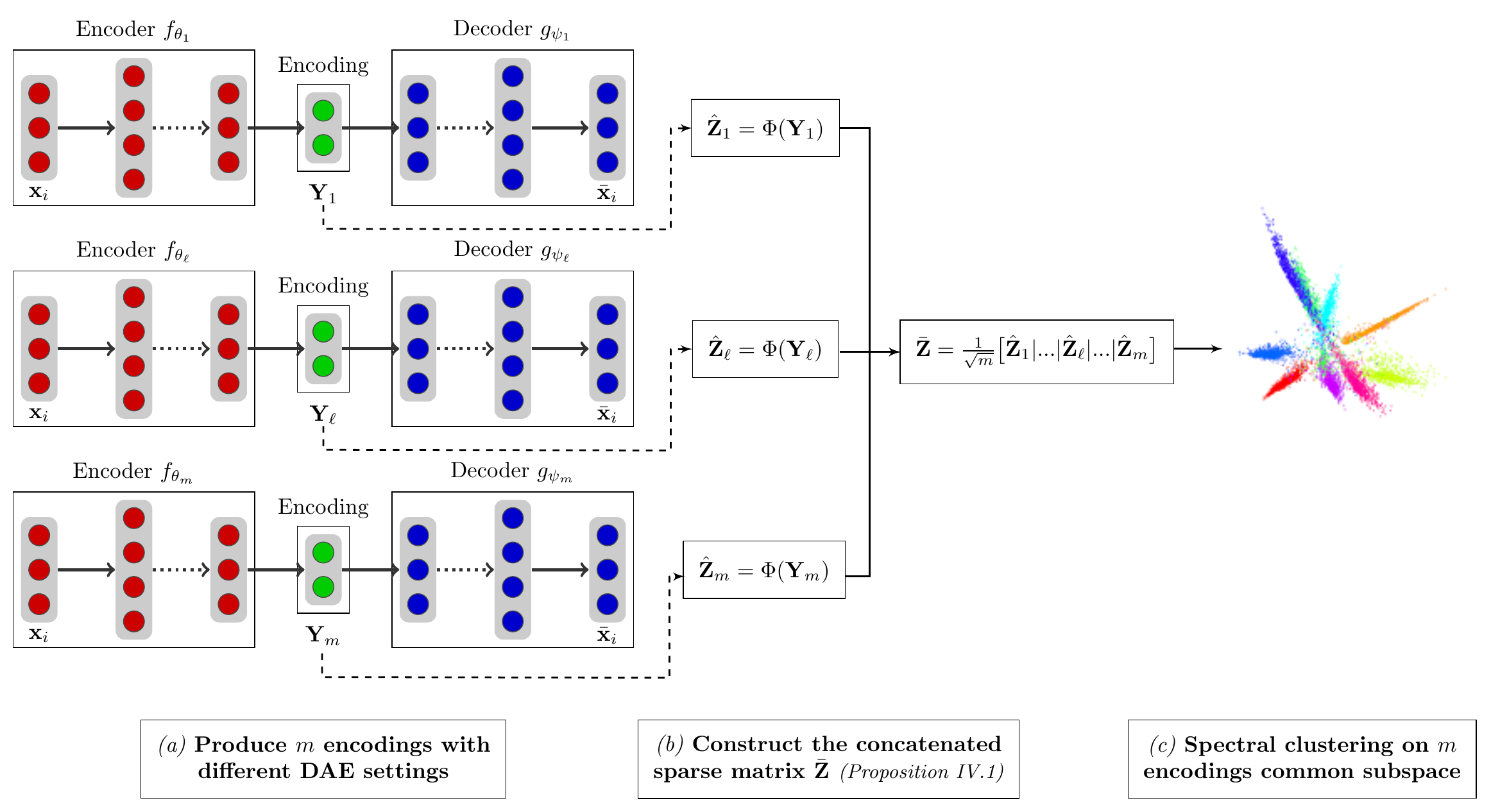}
\caption{{\bf Scheme of SC-EDAE.} {\footnotesize{The {\tt SC-EDAE} algorithm computes first $m$ encodings from DAE with different hyperparameters settings $(a)$, then generates $m$ sparse affinity matrix, $\{\hat{\mathbf{Z}}_\ell\}_{\ell \in [1,m]}$, that are concatenated in $\mathbf{\bar{Z}}$ $(b)$, and finally performs a {\tt SVD} on the ensemble graph affinity matrix $\bar{\mathbf{Z}}$ $(c)$.}}}
\label{fig:DAE-LSC_scheme}
\end{figure*}

The originality and efficiency of our ensemble method hinges on the replacement of a costly eigendecomposition on $\bar{\bS} \in \mathbbm{R}^{n \times n}$ by an eigendecomposition on a low-dimensional and sparse matrix $\mathbf{\bar{Z}}$ $\in \mathbb{R}^{n \times \sum_{j=1}^{m} \ell_{j}}$, with $\sum_{j=1}^{m} \ell_{j}\ll n$ (Alg.~\ref{Alg_sc-edae}, step (c)). In particular, the sparsity of $\mathbf{\bar{Z}}$ enables the use of fast iterative and partial eigenvalue decomposition.

\section{Experiments}
\label{sec:experiments}
\subsection{Deep autoencoders settings}
\label{sec:DAE_settings}
For our experiments, we trained fully connected autoencoders with an encoder $f_\theta$ of three hidden layers of size 50, 75 or 100 for synthetic datasets ({\tt Tetra}, {\tt Chainlink} and {\tt Lsun}; Section~\ref{sec:synth_data}), and three hidden layers of size 500, 750 or 1000 for real datasets ({\tt MNIST}, {\tt PenDigits} and {\tt USPS}; Section~\ref{sec:real_data}), as suggested by Bengio {\it et al.}~\citep{bengio2007greedy}, in all possible orders. The decoder part $g_\psi$ mirrors the encoder stage $f_\theta$. For each DAE architecture (e.g., $\{750-500-1000\}$, $\{100-50-75\}$), 5 encodings were generated with 50, 100, 150, 200 and 250 epochs for real datasets and 200 epochs for synthetic datasets. The weights initialization follows the Glorot's approach~\citep{glorot2010understanding} and all encoder/decoder pairs used rectified linears units (ReLUs), except for the output layer which requires a sigmoid function. The autoencoder data are systematically $L_2$ normalized. We configure the autoencoders using the {\tt Keras tensorflow} Python package, and compile the neural network with binary cross-entropy loss and Adam optimizer~\citep{reddi2018convergence} with the default {\tt Keras} parameters.

\subsection{SC-EDAE ensemble strategy}
\label{sec:SCEDAE_ensStrategy}
The ensemble strategy of {\tt SC-EDAE} exploits the encodings $\{\bY_\ell\}_{\ell \in [1,m]}$vwhich are generated with either {\it (i)} $m$ different DAE initializations or $m$ different DAE epochs number 
in association with one DAE structure (e.g. $d$--500--1000--750--$e$, with $d$ and $e$ the input and encoding layers width resp.), or {\it (ii)}~$m$ DAE with different structures for the same number of landmarks and epochs. In both cases, the {\tt SC-EDAE} strategy enables to compute the $m$ different sparse affinity matrices $\{\hat{\bZ}_\ell\}_{\ell \in [1,m]}$ (Eq.~\ref{eqanchor}) and, following Proposition~\ref{propo1}, generate the ensemble affinity matrix $\mathbf{\bar{Z}}$ (Eq.~\ref{eqzbar}).

\subsection{Synthetic datasets}
\label{sec:synth_data}
As a first step, we focus on synthetic datasets to illustrate the {\tt SC-EDAE} algorithm and show the class-separability information embedded in the left singular vectors matrix of $\bar{\mathbf{Z}}$, noted as $\mathbf{B}^\star$ (Prop.~\ref{propo1} and Alg.\ref{Alg_sc-edae}).
We used generated synthetic data sets selected from the Fundamental Clustering Problem Suite (FCPS)\footnote{The suite can be downloaded from the website of the author: http://www.uni-marburg.de/fb12/datenbionik/data}.
FCPS yields some hard clustering problems, a short description of {\tt Tetra}, {\tt Chainlink} and {\tt Lsun} FCPS data sets and the inherent problems related to clustering are given in Table~\ref{tab:FCPS}. Following the experiments on synthetic data proposed by Yang {\it et al.}~\cite{YangFSH17}, we transformed the low-dimensional FCPS data, $\bh_i \in \mathbb{R}^2$ or $\mathbb{R}^3$, in high-dimensional datapoints, $\boldsymbol{{x}_i} \in \mathbb{R}^{100}$. Specifically, the $\bx_i$ are transformed based on the following equation,
\begin{equation}\label{eqtransform_9}
  \bx_i = \sigma(\mathbf{U}\sigma(\mathbf{W}\bh_i)) 
\end{equation}
where the entries of matrices $\mathbf{W} \in \mathbb{R}^{10\times2}$ and $\mathbf{U} \in \mathbb{R}^{100\times10}$ follow the zero-mean unit-variance i.i.d. Gaussian distribution, and the sigmoid function $\sigma(.)$ introduces nonlinearity. 
\begin{table}[!h]
\centering
\setlength\tabcolsep{4pt}
\caption{Description of the used FCPS data sets.}
\scriptsize
\begin{tabular}{|l|c|c|c|c|}
\hline
\text{Data sets} &\multicolumn{4}{c|}{\text{Characteristics}}\\
                    \cline{2-5} 
                     & \multicolumn{1}{c|}{\text{Samples}} & \multicolumn{1}{c|}{\text{Features}} & \multicolumn{1}{c|}{\text{Clusters}}  & \multicolumn{1}{c|} {Main Problem} \\ 
\hline
\text{Tetra}      &  \multicolumn{1}{c|}{400}   & \multicolumn{1}{c|}{3}  & \multicolumn{1}{c|}{4}  & \multicolumn{1}{l|}{inner vs inter cluster distances} \\
\hline
\text{Chainlink}   &  \multicolumn{1}{c|}{1000}  & \multicolumn{1}{c|}{3}  & \multicolumn{1}{c|}{2}  & \multicolumn{1}{l|}{not linear separable} \\
\hline
\text{Lsun}       &  \multicolumn{1}{c|}{400}   & \multicolumn{1}{c|}{2}  & \multicolumn{1}{c|}{3}  & \multicolumn{1}{l|}{different variances} \\
\hline
\end{tabular}
\label{tab:FCPS}
\end{table}

\begin{figure*}[!h]
\centering
\setlength\tabcolsep{4pt}
\begin{tabular}[t]{lccc}
& \makecell{{\bf (a) Tetra}} & \makecell{{\bf (b) Chainlink}} & \makecell{{\bf (c) Lsun}}\\
\vspace{-0.6cm}
\makecell{\bf \footnotesize Original \\\bf \footnotesize data} & \makecell{\\ \includegraphics[height=2.9cm,width=2.9cm]{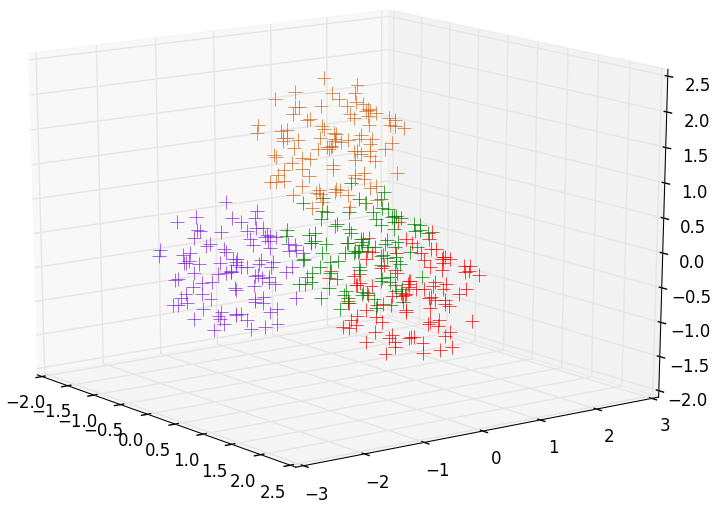} \\ }   & \makecell{\\\includegraphics[height=2.9cm,width=2.9cm]{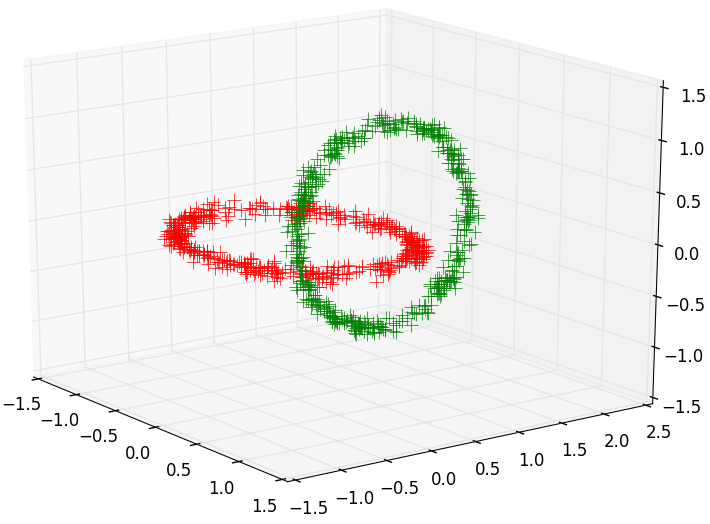} \\} & 
\makecell{\\ \includegraphics[height=2.9cm,width=2.9cm]{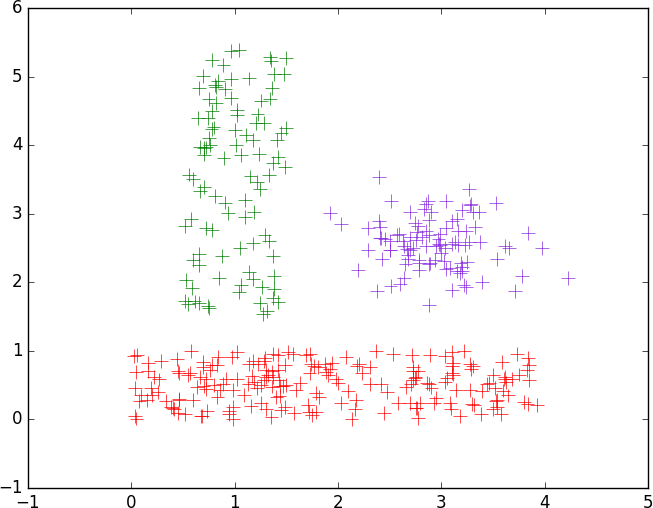} \\}\\
\makecell{\bf \footnotesize SC-EDAE\\ \bf \footnotesize Embeddings} & \makecell{\\ \includegraphics[height=2.9cm,width=2.9cm]{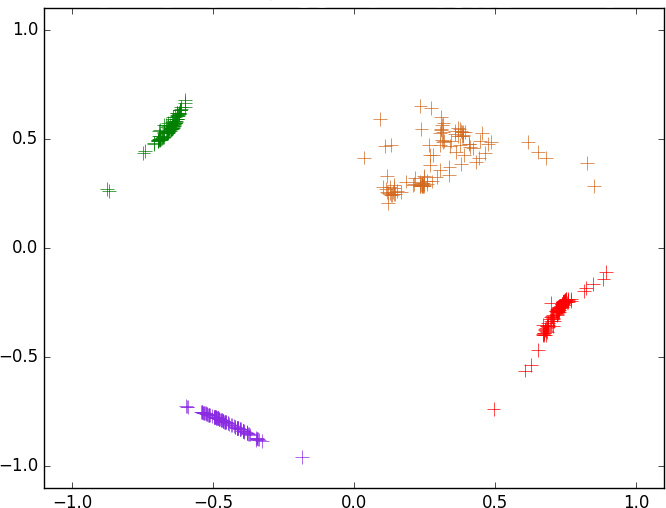} \\ }   & \makecell{\\\includegraphics[height=2.9cm,width=2.9cm]{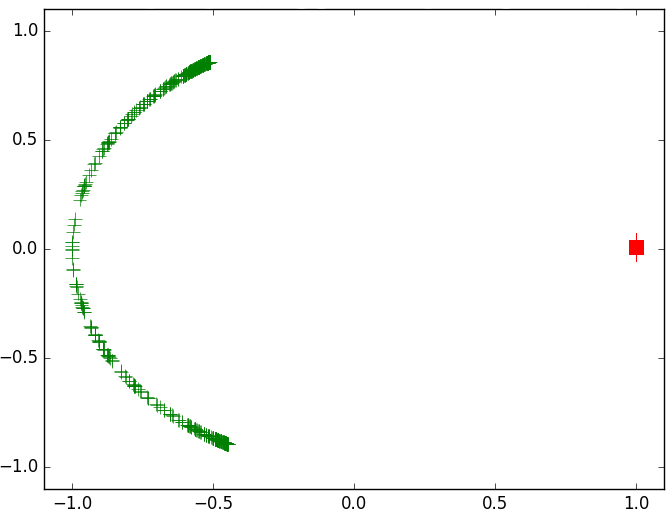} \\} & 
\makecell{\\ \includegraphics[height=2.9cm,width=2.9cm]{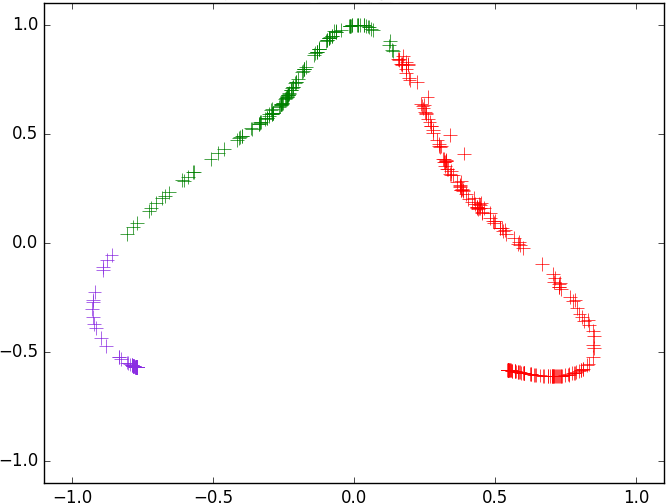} \\}\\
& \makecell{\scriptsize{SC-EDAE $acc = 1.00$}} & \makecell{\scriptsize{SC-EDAE $acc = 1.00$}} & \makecell{\scriptsize{SC-EDAE $acc = 0.90$}}\\
 \end{tabular}
\caption{{Visualization of the {\tt SC-EDAE} embeddings on {\tt Tetra}, {\tt Chainlink} and {\tt Lsun} datasets} {The two first components of $\bB$ (Alg.\ref{Alg_sc-edae}, step (c)) gives a visualization of the datapoints separability with the {\tt SC-EDAE} method. Colors indicate the predicted labels. }}
\label{fig:FCPS_synth9}
\end{figure*}

\subsection{Real datasets}
\label{sec:real_data}
Our {\tt SC-EDAE} algorithm (Alg.\ref{Alg_sc-edae}) is fully evaluated on three image datasets, namely {\tt MNIST} (Modified National Institute of Standards and Technology)~\cite{lecun1998gradient}, {\tt PenDigits} (Pen-Based Recognition of Handwritten Digits)~\cite{alimoglu1996methods} and {\tt USPS} (U.S. Postal Service)~\cite{vapnik1998statistical} and their DAE encodings (see Section~\ref{sec:DAE_settings} for details on DAE structure).

\begin{itemize}
	\item[] {\bf MNIST}~\cite{lecun1998gradient} The database is loaded from the {\tt Keras} Python package. The training and testing sets contain respectively $60,000$ and $10,000$ images of size $28\times28$ of the integers in range $0-9$. The images are of grayscale levels rescaled within $[0,1]$ by dividing by $255$.
    \item[] {\bf PenDigits}~\cite{alimoglu1996methods} The training and testing sets contain respectively $7,494$ and $3,498$ images of size $16\times16$ of the integers in range $0-9$. The images with 16 numeric attributes rescaled within $[0,1]$ by dividing by $100$.
    \item[] {\bf USPS}~\cite{vapnik1998statistical} The database is prepared as proposed in~\cite{guo2017improved}
    and contains $9,298$ images of size $16\times16$ pixels of the $10 \mhyphen$ digits (integers in range $0-9$) rescaled within $[0,1]$.
\end{itemize}

The classes distribution for each dataset is given in Table~\ref{tab:class_dist}. {\tt MNIST} and {\tt PenDigits} appear as balanced-class datasets while {\tt USPS} has an imbalanced distribution.
\begin{table}
\centering
\caption{Class distribution for {\tt MNIST}, {\tt PenDigits} and {\tt USPS} datasets.}
\setlength\tabcolsep{4pt}
\scriptsize
\begin{tabular}{|l|r|r|r|r|r|r|r|r|r|r| }
\hline
& \multicolumn{1}{c}{0} & \multicolumn{1}{|c}{1} & \multicolumn{1}{|c}{2} & \multicolumn{1}{|c}{3} & \multicolumn{1}{|c}{4} & \multicolumn{1}{|c}{5} & \multicolumn{1}{|c}{6} & \multicolumn{1}{|c}{7} & \multicolumn{1}{|c}{8} & \multicolumn{1}{|c|}{9}\\
\hline
MNIST & 5923 & 6742 & 5958 & 6131 & 5842 & 5421 & 5918 & 6265 & 5851 & 5949\\
PenDigits & 780 & 779 & 780 & 719 & 780 & 720 & 720 & 778 & 719 & 719\\
USPS & 1194 & 1005 & 731 & 658 & 652 & 556 & 664 & 645 & 542 & 644\\
\hline
\end{tabular}
\label{tab:class_dist}
\end{table}

\subsection{Experiment results}
\label{sec:exp_res}
\subsubsection{Evaluation on synthetic data}
Synthetic data enable us to easily explore the separability capacity of the embeddings matrix $\mathbf{B}$. For the experiments related to synthetic data, {\tt SC-EDAE} is used in its ensemble structure version, with $m=6$ encodings from different structures,
and the number of landmarks is set to 100. Applying {\tt SC-EDAE} on the data sets {\tt Tetra}, {\tt Chainlink} and {\tt Lsun}, we note that the 2D representations of the obtained clusters reflect the real cluster structure (Fig.~\ref{fig:FCPS_synth9} a, b, c; projection on the two first components of the matrix $\mathbf{B}$ as computed in Alg.\ref{Alg_sc-edae}, step c). The {\tt SC-EDAE} accuracy is of 1.00 for {\tt Tetra} and {\tt Chainlink}, and 0.90 for {\tt Lsun}. 
The colored labels correspond to the predicted clusters. Complementary tests with different transformation functions confirm this trend (see annexes, Section~\ref{sec:sup_exp_synth}).

\subsubsection{Baseline evaluations on real data}

As baseline, we first evaluate \emph{k-means} and \emph{LSC}~\cite{Chen11LandmarkSpectral} on the three real datasets. The \emph{kmeans}$_{++}$ approach corresponds to the {\tt scikit-learn} Python package \emph{k-means} implementation with the default parameters and \emph{kmeans}$_{++}$ initialization scheme~\cite{arthur2007k}. We implemented the \emph{LSC} method in Python, following the Matlab implementation proposed in~\cite{Chen11LandmarkSpectral}, and kept the same default parameters. The \emph{LSC} landmarks initialization is done with \emph{k-means}, which has been shown to provide better accuracy results than the random initialization~\cite{Chen11LandmarkSpectral,BanijamaliG17}. We consider landmarks number within $100$ and $1000$, by step of $100$. The evaluations are done either on the original datasets (Table~\ref{tab:ref_accuracy}, columns \emph{LSC} and \emph{kmeans}$_{++}$ or on the encodings (Table~\ref{tab:ref_accuracy}, columns {\it DAE-LSC} and {\it DAE-kmeans$_{++}$}). The accuracy reported for \emph{LSC} and {\it k-means$_{++}$} corresponds to the mean over $10$ clustering replicates on the original datasets, over all epoch and landmark numbers. The accuracy reported for {\it DAE-LSC} and {\it DAE-kmeans$_{++}$} corresponds to an average over 50 replicates (10 replicates on each of the 5 encodings per DAE structure), over all epoch and landmark numbers (see annexes for complementary results per DAE structure, Section~\ref{sec:sup_exp_real}). 

As can be seen from Table~\ref{tab:ref_accuracy} and already reported in~\cite{Chen11LandmarkSpectral}, \emph{LSC} outperforms \emph{kmeans}$_{++}$ for the clustering task on the three datasets (bold values, columns \emph{LSC} and {\it kmeans$_{++}$}), yet with larger standard deviations. The same trend is observed when applying \emph{LSC} and {\it kmeans$_{++}$} on encodings, with standard deviations of similar magnitude for both clustering methods (bold values, columns {\it DAE-LSC} and {\it DAE-kmeans$_{++}$}).  

\begin{table*}[!h]
\centering
\caption{{\bf Mean clustering accuracy for \emph{LSC} and \emph{k-means} on original real datasets and encodings:} {\footnotesize{Evaluations on MNIST, PenDigits, USPS data and their encodings. Bold values highlight the higher accuracy values.}}}
\scriptsize
\setlength\tabcolsep{1pt}
\begin{tabular}{|c|c|c||c|c|c|c| }
\hline
{\footnotesize{Data}} & {\footnotesize{LSC}} & {\footnotesize{kmeans$_{++}$}} & {\footnotesize{DAE structure}} & {\footnotesize{DAE-LSC}} & {\footnotesize{DAE-kmeans$_{++}$}}\\
\hline
\multirow{6}{*}{\footnotesize{MNIST}} & \multirow{6}{*}{{\bf 68.55} $\pm 2.25 $} & \multirow{6}{*}{55.13 $\pm 0.05$} & 500--750--1000 & 87.06 $\pm 8.27 $ & 76.33 $\pm 7.69 $\\
 & & & 500--1000--750 & 90.48 $\pm 5.20 $ & 79.22 $\pm 5.93 $\\
 & & & 750--500--1000 & 88.31 $\pm 5.46 $ & 77.71 $\pm 6.03 $ \\
 & & & 750--1000--500 & 90.30 $\pm 4.89 $ & 79.45 $\pm 5.81 $\\
 & & & 1000--500--750 & {\bf 91.54} $\pm 3.06 $ & 79.98 $\pm 5.98 $\\
 & & & 1000--750--500 & 90.96 $\pm 3.98$ & 77.70 $\pm 5.09 $\\
 \hline
\multirow{6}{*}{\footnotesize{PenDigits}} & \multirow{6}{*}{{\bf 80.17} $\pm 3.76 $} & \multirow{6}{*}{73.89 $\pm 3.97 $} & 500--750--1000 & {\bf 85.59} $\pm 2.34 $ & 73.64 $\pm 4.00 $\\
 & & & 500--1000--750 & 85.11 $\pm 3.15 $ & 74.67 $\pm 3.43 $\\
 & & & 750--500--1000 & 85.36 $\pm 2.91 $ & 73.47 $\pm 3.89 $\\
 & & & 750--1000--500 & 85.27 $\pm 2.92 $ & 74.64 $\pm 4.01 $\\
 & & & 1000--500--750 & 85.02 $\pm 2.72 $ & 74.20 $\pm 3.84 $\\
 & & & 1000--750--500 & 84.39 $\pm 3.04 $ & 73.78 $\pm 3.55 $\\
\hline
\multirow{6}{*}{\footnotesize{USPS}} & \multirow{6}{*}{{\bf 77.20} $\pm 1.49 $} & \multirow{6}{*}{68.36 $\pm 0.08$} & 500--750--1000 & 81.78 $\pm 8.08 $ & 72.85 $\pm 3.52 $\\
 & & & 500--1000--750 & {\bf 83.47} $\pm 7.40 $ & 73.44 $\pm 3.70 $\\
 & & & 750--500--1000 & 79.72 $\pm 6.21 $ & 72.46 $\pm 2.78 $\\
 & & & 750--1000--500 & 80.29 $\pm 5.70 $ & 73.80 $\pm 3.51 $\\
 & & & 1000--500--750 & 81.39 $\pm 4.46 $ & 74.07 $\pm 3.07 $\\
 & & & 1000--750--500 & 83.08 $\pm 5.64 $ & 72.41 $\pm 3.06 $\\
 \hline
\end{tabular}
\label{tab:ref_accuracy}
\end{table*}

The results from Table~\ref{tab:ref_accuracy} demonstrate that the simple combination of DAE and \emph{LSC} or \emph{k-means} already reaches higher accuracy and smaller standard deviations than without the autoencoder step. 
These results also show the advantage of associating the DAE encodings with the landmark-based representation over the \emph{k-means} approach for the clustering task (columns {\it DAE-LSC} and {\it DAE-kmeans$_{++}$}). In particular, the average accuracy for the {\tt MNIST} and {\tt USPS} datasets varies within $[87.06;91.54]$ and $[79.72;83.47]$ respectively for {\it DAE-LSC} and within $[77.70;79.98]$ and $[72.41;74.07]$ respectively for {\it DAE-kmeans$_{++}$}. 

Although the encodings generated by the deep autoencoder improve the clustering accuracy, finding {\it a priori} the most appropriate DAE structure remains a challenging task. The accuracy may also vary for different landmark and epoch numbers (see Table~\ref{tab:ens_accuracy_lm} and annexes Tables~\ref{tab:ref_ari_nmi}~\&~\ref{tab:ens_ari_nmi}). As will be seen in the following sections, the ensemble strategy of {\tt SC-EDAE} provides a straightforward way to alleviate these issues and avoid arbitrary DAE hyperparameters setting.

\subsubsection{SC-EDAE ensemble evaluations}
\label{sec:DAE-LSC_ens_results}

The Table~\ref{tab:ens_accuracy} summarizes the performance of our \emph{LSC}-based ensemble approach in the two cases detailed in section~\ref{sec:SCEDAE_ensStrategy}. Specifically, the columns {\it Ens.Init.} and {\it Ens.Ep.} 
indicate the clustering accuracy for the case {\it (i)} with an ensemble approach on the DAE weights initialization ({\it Ens.Init.}, $m=5$) and the DAE training epoch numbers ({\it Ens.Ep.}, $m=5$). 
The clustering accuracy values for the ensemble approach on various DAE structures, {\it i.e.} case ({\it ii}), is provided in the column {\it Ens.Struct.} ($m=6$). 

The {\tt SC-EDAE} ensemble strategy provides higher clustering accuracy as compare to the baseline evaluations (Table~\ref{tab:ref_accuracy}).
In particular, the mean accuracy values obtained with the ensemble strategy for {\tt MNIST}, {\tt PenDigits} and {\tt USPS} can reach, $95.33 \pm 0.07$, $87.28 \pm 0.48$ and $85.22 \pm {2.14 }$ respectively, {\it vs.} $91.54 \pm 3.06$, $85.59 \pm {2.34 }$ and $83.47 \pm {7.40 }$ (Table~\ref{tab:ref_accuracy}).

The {\tt SC-EDAE} ensemble approach on the DAE structures ({\it Ens.Struct.}) enables also to reach higher accuracy as compare to the baseline evaluations for {\tt MNIST} ($93.23 \pm 0.28$ {\it vs.} $91.54 \pm 3.06 $) and {\tt PenDigits} ($86.44 \pm 1.42 $ {\it vs.} $85.59 \pm {2.34 }$), but with the added benefit of avoiding the arbitrary choice of a particular DAE structure. The {\tt SC-EDAE} results for {\tt USPS} with an ensemble on several structures are lower than our reference evaluations ($81.78 \pm {3.61 }$ {\it vs.} $83.47 \pm {7.40 }$), yet the accuracy value remains fairly high with lower standard deviation. 

\begin{table}[!h]
\centering
\caption{{\bf Mean clustering accuracy for {\tt SC-EDAE}, ensemble on initializations, epochs number and structures:}
Bold values highlight the higher accuracy values.}
\scriptsize
\setlength\tabcolsep{4pt}
\begin{tabular}{|c|c||c|c|c| }
\hline
{\footnotesize{Dataset}} & {\footnotesize{DAE structure}} & {\footnotesize{Ens.Init.}} & {\footnotesize{Ens.Ep.}} & {\footnotesize{Ens.Struct.}}\\
\hline
\multirow{6}{*}{\footnotesize{MNIST}} & 500--750--1000 & 89.19
$\pm 0.41$ & 85.54 $\pm 4.30 $ & \multirow{6}{*}{93.23 $\pm 2.84 $}\\
 & 500--1000--750 & {\bf 95.33} $\pm 0.07$ & 94.34 $\pm 2.68 $ &\\
 & 750--500--1000 & 92.15 $\pm 0.25$ & 92.03 $\pm 3.87 $ &\\
 & 750--1000--500 & 92.65 $\pm 0.13$ & 92.26 $\pm 3.71 $ &\\
 & 1000--500--750 & 94.28 $\pm 0.20$ & 94.57 $\pm 1.48 $ &\\
 & 1000--750--500 & 93.87 $\pm 0.38$ & {\bf 95.25} $\pm 0.59$ &\\
 \hline
\multirow{6}{*}{\footnotesize{PenDigits}} & \scriptsize{$500$-$750$-$1000$} & {\bf 86.80}
 $\pm 0.74$ & 87.08 $\pm 1.10 $ & \multirow{6}{*}{86.44 $\pm 1.42$}\\
 & 500--1000--750 & 85.95 $\pm 0.73$ & 86.69 $\pm 1.33 $ &\\
 & 750--500--1000 & 86.69 $\pm 0.87$ & 87.27 $\pm 0.60$ &\\
 & 750--1000--500 & 86.48 $\pm 1.09 $ & 86.91 $\pm 1.01 $ &\\
 & 1000--500--750 & 86.75 $\pm 6.40$ & 86.96 $\pm 8.10$ &\\
 & 1000--750--500 & 86.66 $\pm 9.50$ & {\bf 87.28} $\pm 0.48$ &\\
\hline
\multirow{6}{*}{\footnotesize{USPS}} & 500--750--1000 & 80.07 $\pm 1.95 $ & 81.36 $\pm 5.09 $ & \multirow{6}{*}{81.78 $\pm 3.61 $}\\
 & 500--1000--750 & 80.54 $\pm 0.77$ & 82.06 $\pm 3.54 $ &\\
 & 750--500--1000 & 79.49 $\pm 1.19 $ & 81.10 $\pm 3.86 $ &\\
 & 750--1000--500 & 79.29 $\pm 1.05 $ & 79.88 $\pm 2.69 $ &\\
 & 1000--500--750 & 84.12 $\pm 1.80 $ & 81.89 $\pm 3.21 $ &\\
 & 1000--750--500 & {\bf 85.22} $\pm 2.14 $ & {\bf 84.96} $\pm 3.29 $ &\\
 \hline
\end{tabular}
\label{tab:ens_accuracy}
\end{table}
While the {\tt SC-EDAE} method aims at providing an ensemble strategy for the deep architecture settings ({\it Ens.Init.}, {\it Ens.Ep.} and {\it Ens.Struct.}, Table~\ref{tab:ens_accuracy}), it relies also on the \emph{LSC} idea which depends on the number of landmarks. We studied the possibility of an ensemble on the number of landmarks ($m=5$). As can be seen from Table~\ref{tab:ens_accuracy_lm}, which provides mean accuracy on $10$ replicates, the ensemble strategy enables again to reach high accuracy values as compared to our baseline evaluations. The results still remain dependent from the DAE structure type, in particular for {\tt MNIST} and {\tt USPS}, and we would therefore recommend to use {\tt SC-EDAE} in its ensemble structure version ({\it ie.}, {\it Ens.Struct.}).

\begin{table}[!h]
\centering
\caption{{\bf Mean clustering accuracy for  {\tt SC-EDAE}, ensemble on landmarks:} {\footnotesize{
Bold values highlight the higher accuracy values}}.}
\setlength\tabcolsep{4pt}
\scriptsize
\begin{tabular}{|c||c|c|c|}
\hline
DAE structure & MNIST & PenDigits & USPS\\
\hline
500--750--1000 & 88.84 $\pm 1.22 $ & {\bf 87.31} $\pm 1.13 $ & 82.17 $\pm 3.79 $\\
500--1000--750 & {\bf 95.35} $\pm 0.20$ & 87.21 $\pm 0.36$ & 81.96 $\pm 2.74 $\\
750--500--1000 & 92.48 $\pm 1.27 $ & 87.16 $\pm 0.99$ & 80.61 $\pm 3.46 $\\
750--1000--500 & 92.53 $\pm 0.76$ & 87.09 $\pm 0.95$ & 80.30 $\pm 1.26 $\\
1000--500--750 & 93.76$\pm 1.14 $ & 86.67 $\pm 1.40 $ & 86.35 $\pm 2.62 $\\
1000--750--500 & 95.08 $\pm 0.17$ & 87.13 $\pm 1.26 $ & {\bf 87.32} $\pm 4.85 $\\
 \hline
\end{tabular}
\label{tab:ens_accuracy_lm}
\end{table}

\subsection{Evaluation in terms of NMI and ARI }

Evaluating clustering results is not 
a trivial task. The clustering accuracy is not always a reliable measure when the clusters are not balanced and the number of clusters is high. To better appreciate the quality of our approach, in the sequel we retain two widely used measures to assess the quality of clustering, namely the Normalized Mutual Information  \cite{Strehl:2003} and the Adjusted Rand Index \citep{steinley2004properties}. Intuitively, NMI quantifies how much the estimated clustering is informative about the true clustering, while the ARI measures the degree of agreement between an estimated clustering and a reference clustering. Higher NMI/ARI is better.

We report in Figure~\ref{fig:boxplots_USPS_ARI} the ARI and NMI values for the three real datasets ({\tt MNIST}, {\tt PenDigits} and {\tt USPS}). The ARI and NMI values are given for the baseline evaluations ({\it DAE-kmeans$_{++}$} and {\it DAE-LSC}; average results over 10 runs), and the various ensemble versions of {\tt SC-EDAE} ({\it Ens.Init}, {\it Ens.Ep.} and {\it Ens.Struct.}; average results over 10 runs for each of the 5 different encodings). The ensemble paradigm of {\tt SC-EDAE} ensures high ARI and NMI results with low standard deviations for all real datasets, even for {\tt USPS} which is an imbalanced-class dataset (Fig.~\ref{fig:boxplots_USPS_ARI}, green boxplots). 

%
\begin{figure*}[!h]
\centering
\setlength\tabcolsep{4pt}
\begin{tabular}{ccc}
\makecell{\\ \includegraphics[height=2.9cm,width=4.1cm]{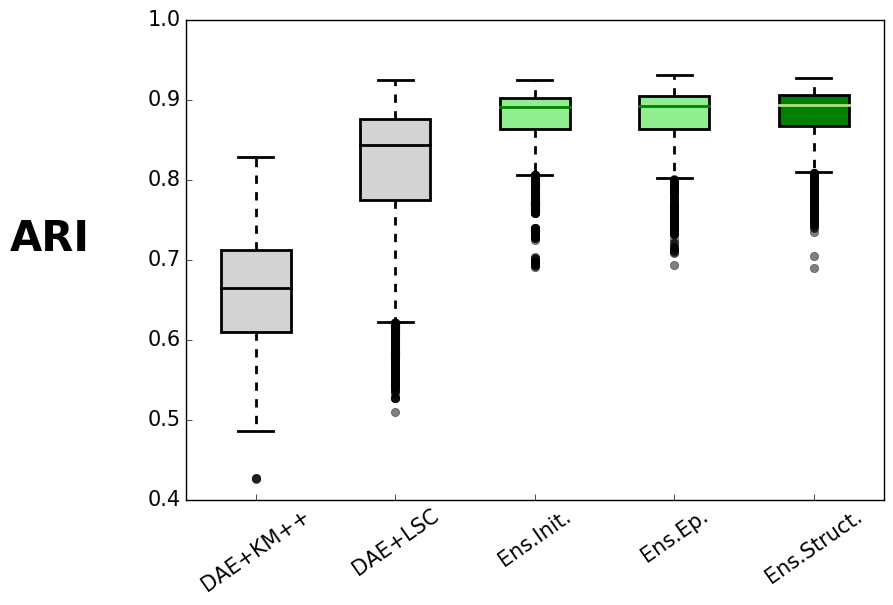}}  &  
\makecell{\\ \includegraphics[height=2.8cm,width=3.5cm]{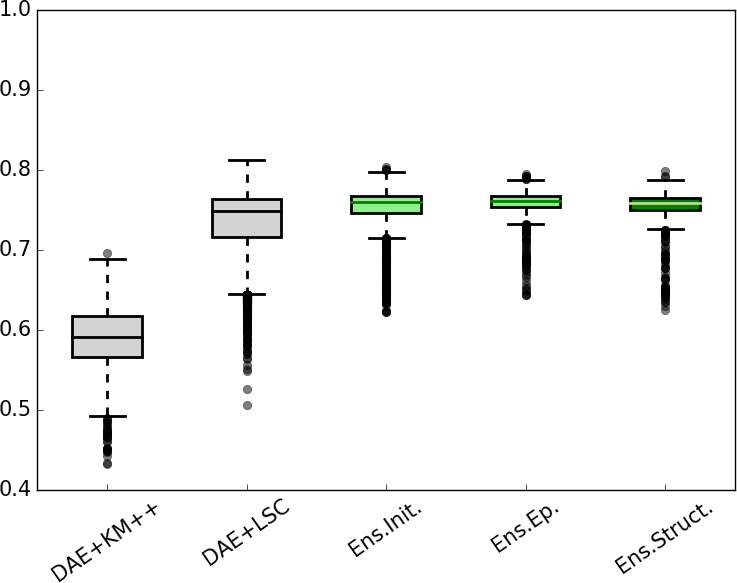}} & 
\makecell{\\ \includegraphics[height=2.8cm,width=3.5cm]{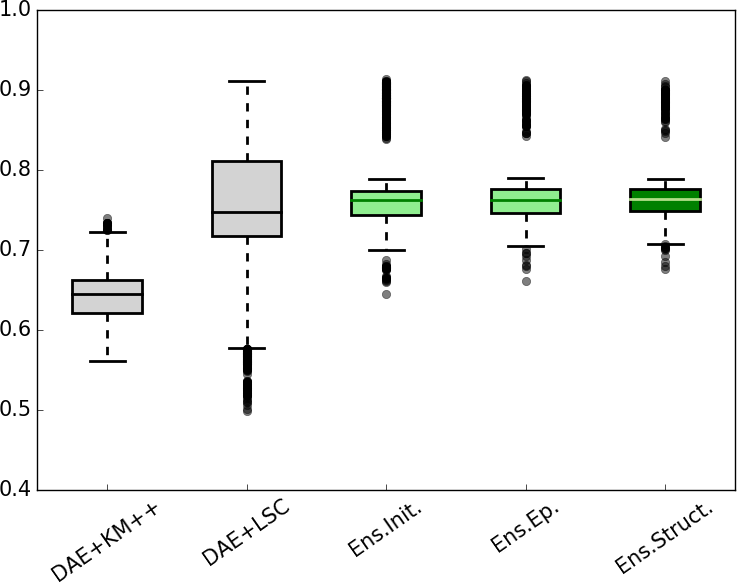} } \\
\includegraphics[height=2.9cm,width=4.1cm]{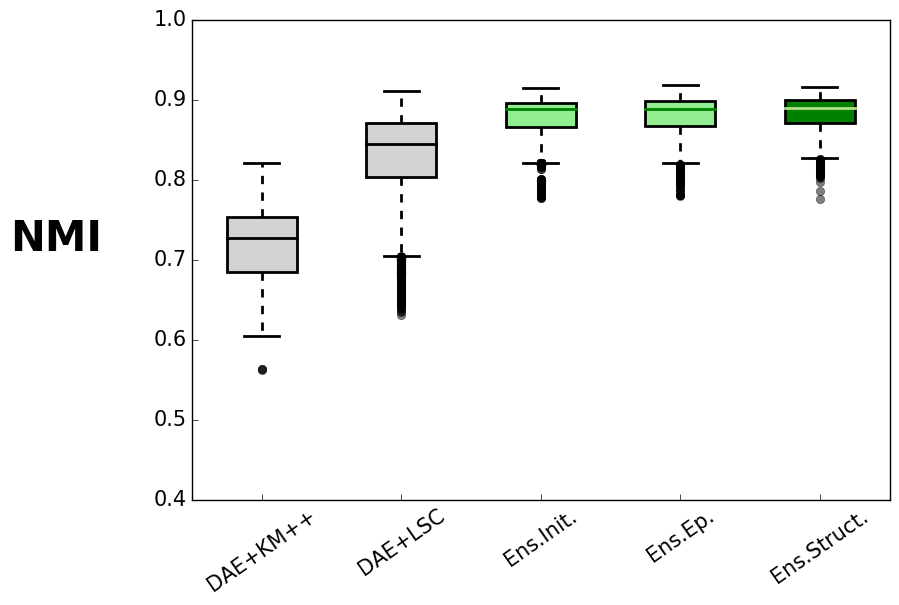} & \includegraphics[height=2.8cm,width=3.5cm]{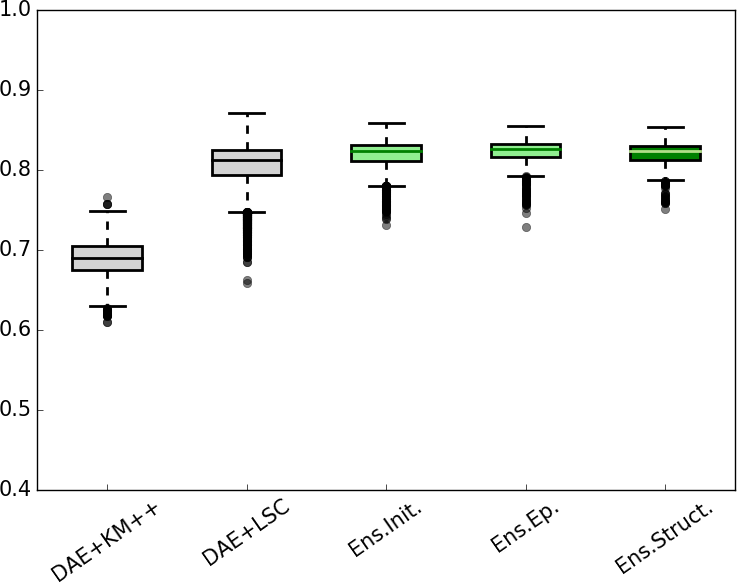} & \includegraphics[height=2.8cm,width=3.5cm]{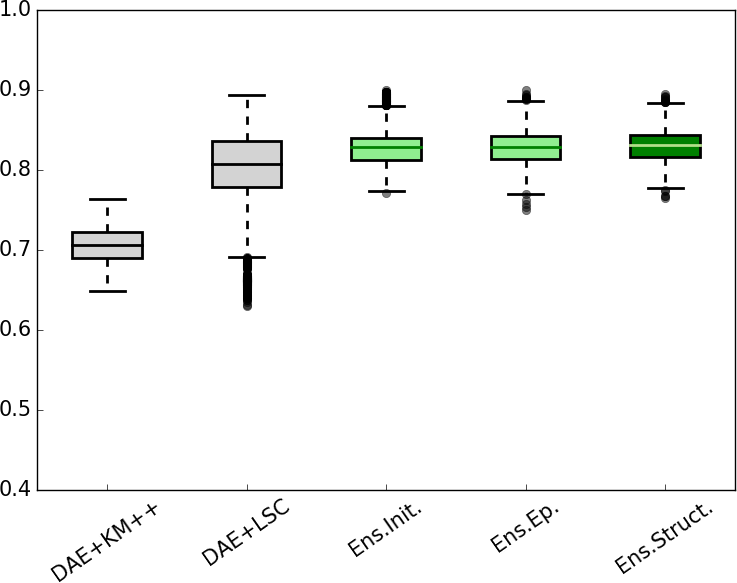} \\
{\footnotesize{\bf MNIST}} & {\footnotesize{\bf PenDigits}} & {\footnotesize{\bf USPS}}\\
\end{tabular}
\caption{\footnotesize{
Comparison of Adjusted Rand Index (ARI) and Normalized Mutual Information (NMI) for our {\tt SC-EDAE} approach (ensemble on initialization, epochs and structures; 10 runs) and baseline methods (combination of deep autoencoders and \emph{k-means} or \emph{LSC}; 10 runs for each of the 5 encodings).}}
\label{fig:boxplots_USPS_ARI}
\end{figure*}

We also detail the ARI and NMI evaluations per DAE structure in annexes, Tables~\ref{tab:ref_ari_nmi}~\&~\ref{tab:ens_ari_nmi}. These supplementary results highlight the strong influence of a particular DAE structure on the ARI and NMI values. As an example, the ARI minimal and maximal values for {\it DAE-LSC} are $73.66$ and $77.75$ respectively for {\tt USPS}, a difference of 4.09 (Table~\ref{tab:ref_ari_nmi}). Another striking example can be found for the {\tt SC-EDAE} in its ensemble initialization version ({\it Ens.Init.}) applied to {\tt MNIST}, where the ARI values fluctuate within a $[81.87;90.17]$ (Table~\ref{tab:ens_ari_nmi}). Based on these evaluations, and as already mentioned (Section~\ref{sec:SCEDAE_ensStrategy}), we would recommend to use {\tt SC-EDAE} in its ensemble structure version (i.e., {\it Ens.Struct.}) to alleviate the issue of the DAE structure choice.

\subsection{Comparison to deep k-means variants}

Several strategies that use  deep learning algorithm and \emph{k-means} approaches, sequentially or jointly, have demonstrated accuracy improvement on the clustering task. Among these methods, two approaches can now be considered as state-of-the-art methods, namely IDEC (Improved Deep Embedded Clustering)~\cite{Guo:2017} and DCN (Deep Clustering Network)~\cite{YangFSH17}. Very recently, the DKM (Deep \emph{k-means}) algorithm, which applies a \emph{k-means} in an AE embedding space, outperformed these approaches~\cite{fard2018deep}. 

\begin{table}[!h]
\centering
\caption{{\bf Mean clustering accuracy and NMI comparison with deep k-means variants:} {\footnotesize Mean accuracy and NMI for MNIST and USPS over $10$ replicates with {\tt SC-EDAE} and comparison to baselines and state-of-the-art approaches. Bold values highlight the higher accuracy values}.}
\setlength\tabcolsep{4pt}
\scriptsize
\begin{tabular}{|l|cc|cc| }
\hline
\multirow{2}{*}{\footnotesize{Model}} & \multicolumn{2}{c}{\footnotesize{MNIST}} & \multicolumn{2}{|c|}{\footnotesize{USPS}}\\
 & \footnotesize{ACC} & \footnotesize{NMI} & \footnotesize{ACC} & \footnotesize{NMI}\\
\hline
\multicolumn{5}{|c|}{\footnotesize{\bf baselines}}\\
\hline
 \scriptsize{kmeans$_{++}$} & 55.13 $\pm 0.05$ & 52.89 $\pm 0.02$ & 68.36 $\pm 0.08$ & 65.67 $\pm 0.10$\\
 \scriptsize{LSC} & 68.55 $\pm 2.25$ & 70.54 $\pm 0.83$ & 77.20 $\pm 1.49$ & 79.48 $\pm 0.90$\\
 \hline
 \scriptsize{DAE+kmeans$_{++}$} & 78.40 $\pm 6.09$ & 71.97 $\pm 4.13$ & 73.17 $\pm 3.27$ & 70.48 $\pm 1.84$\\
 \scriptsize{DAE+LSC} & 89.78 $\pm 5.14$ & 83.06 $\pm 4.38$ & 81.62 $\pm 6.25$ & 80.44 $\pm 3.39$\\
\hline
\multicolumn{5}{|c|}{\footnotesize{\bf no pretraining required}}\\
\hline
 \scriptsize{SC-EDAE Ens.Init.} & 92.91 $\pm 0.24$ & 87.65 $\pm 0.18$ & 81.46 $\pm 1.48$ & 82.88 $\pm 0.59$\\
 \scriptsize{SC-EDAE Ens.Ep.} & 92.33 $\pm 2.77$ & 87.72 $\pm 2.42$ & {\bf 81.88} $\pm 3.62$ & 83.03 $\pm 1.88$\\
 \scriptsize{SC-EDAE Ens.Struct.} & {\bf 93.23} $\pm 2.84$ & {\bf 87.93} $\pm 2.27$ & 81.78 $\pm 3.61$ & {\bf 83.17} $\pm 1.96$\\
\hline
\multicolumn{5}{|c|}{\footnotesize{Deep clustering approaches without pretraining}~\scriptsize{\it (Fard et al. 2018)~\cite{fard2018deep}}}\\
\hline
\scriptsize{DCN$^{np}$} & 34.8 $\pm 3.0$ & 18.1 $\pm 1.0$ & 36.4 $\pm 3.5$ & 16.9 $\pm 1.3$ \\
\scriptsize{IDEC$^{np}$} & 61.8 $\pm 3.0$ & 62.2 $\pm 1.6$ & 53.9 $\pm 5.1$ & 50.0 $\pm 3.8$ \\
\scriptsize{DKM$^{a}$} & 82.3 $\pm 3.2$ & 78.0 $\pm 1.9$ & 75.5 $\pm 6.8$ & 73.0 $\pm 2.3$ \\
\hline
\multicolumn{5}{|c|}{\footnotesize{Deep clustering approaches with pretraining}~\scriptsize{\it (Fard et al. 2018)~\cite{fard2018deep}}}\\
\hline
\scriptsize{DCN$^{p}$} & 81.1 $\pm 1.9$ & 75.7 $\pm 1.1$ & 73.0 $\pm 0.8$ & 71.9 $\pm 1.2$ \\
\scriptsize{IDEC$^{p}$} & 85.7 $\pm 2.4$ & 86.4 $\pm 1.0$ & 75.2 $\pm 0.5$ & 74.9 $\pm 0.6$ \\
\scriptsize{DKM$^{p}$} & 84.0 $\pm 2.2$ & 79.6 $\pm 0.9$ & 75.7 $\pm 1.3$ & 77.6 $\pm 1.1$ \\
\hline
\end{tabular}
\label{tab:ens_accuracy_cmp}
\end{table}
We compare {\tt SC-EDAE} to these three methods and summaries these evaluations in Table~\ref{tab:ens_accuracy_cmp}. The last six rows of Table~\ref{tab:ens_accuracy_cmp} are directly extracted from the DKM authors study~\cite{fard2018deep}. The accuracy and NMI values of these six rows are an average over 10 runs. The other values correspond to our evaluations. Specifically, baseline results are given in the first four rows, and correspond to the clustering task via {\it k-means$_{++}$} or \emph{LSC} (average results over 10 runs), and via a combination of DAE and \emph{k-means} or \emph{LSC} (average results over 10 runs for each of the 5 different encodings). The {\tt SC-EDAE} rows gives the accuracy and NMI results for our ensemble method, with an ensemble over several initializations ({\it SC-EDAE Ens.Init.}), epoch numbers ({\it SC-EDAE Ens.Ep.}) and DAE architectures ({\it SC-EDAE Ens.Struct.}).

As can be seen from Table~\ref{tab:ens_accuracy_cmp}, while our {\tt SC-EDAE} approach does not require any pretraining, it outperforms the DCN and IDEC methods in there pretrained version (Table~\ref{tab:ens_accuracy_cmp}, DCN$^p$ and IDEC$^p$ results). The DKM method performs well with and without pretraining. Yet, our {\tt SC-EDAE} approach reaches higher accuracy and NMI results than the DKM approach with and without pretraining.

\subsection{Visualization of latent space}

We investigate the quality of the representation learned with {\tt SC-EDAE} and in particular the positive influence of the left singular vectors matrix of $\bar{\mathbf{Z}}$, $\mathbf{B}$ (Alg.\ref{Alg_sc-edae}, step c), on the clustering task. Specifically, we visualize the datapoints nearest-neighbor from the $\mathbf{B}$ matrix using the t-SNE visualization tool~\cite{maaten2008visualizing} that can project embeddings into two components ({\tt TSNE} Python version from the {\tt sklearn} package ). The results are given in Figure~\ref{fig:tSNE_B}. The t-SNE hyperparameters {\it perplexity}, {\it learning rate} and {\it number of iterations} are set to 40, 200 and 500 for {\tt MNIST}, and 25, 100 and 400 for {\tt PenDigits} and {\tt USPS}, following the recommendations and experimental setup of Maaten {\it et al.}~\cite{maaten2008visualizing}.  
\begin{figure*}[!h]
\vspace{-0.3cm}
\centering
\setlength\tabcolsep{1.5pt}
\begin{tabular}[t]{ccc}
\makecell{\\ \hspace{0cm} \includegraphics[height=3.5cm,width=3.8cm]{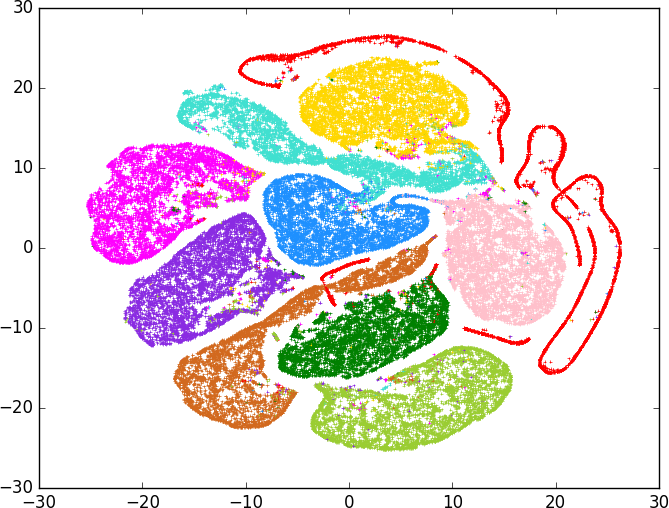} \\ }   & \makecell{\\ \hspace{0cm} \includegraphics[height=3.5cm,width=3.8cm]{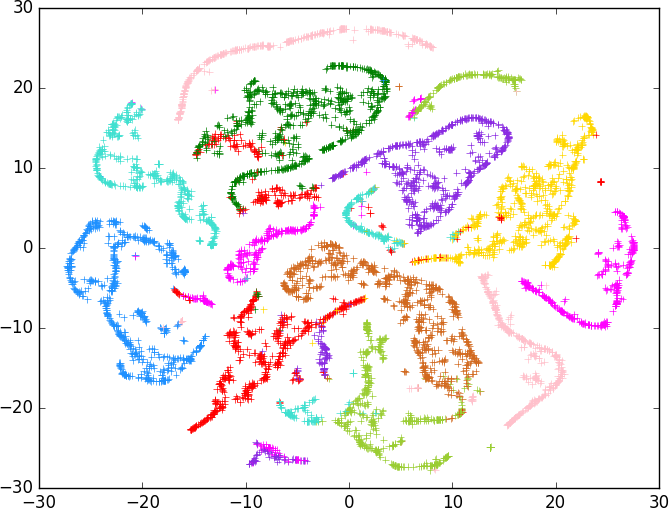} \\} & 
\makecell{\\ \hspace{0cm} \includegraphics[height=3.5cm,width=3.8cm]{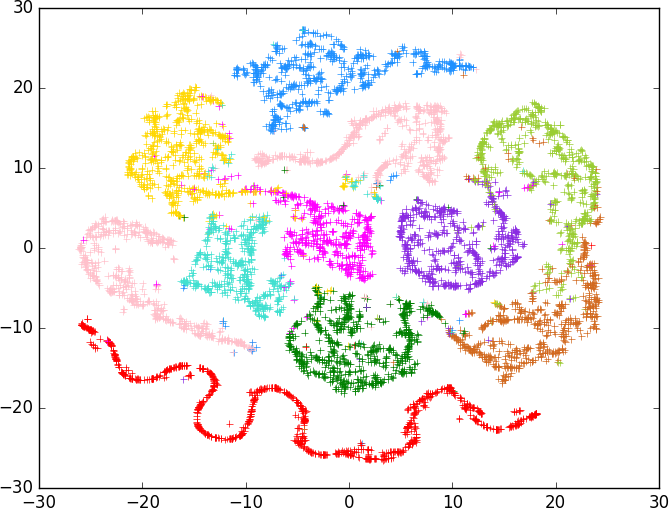} \\}\\
{\footnotesize{\bf MNIST}} & {\footnotesize{\bf PenDigits}} & {\footnotesize{\bf USPS}}\\
 \end{tabular}
 \vspace{-0.3cm}
 \caption{{ \footnotesize{t-SNE Vizualization of the embeddings $\bB$ from the {\tt SC-EDAE} approach on MNIST, PenDigits and USPS datasets. The t-SNE approach provides clustering visualization of the datapoints from the $\bB$ embeddings. Colors indicate the ground truth labels corresponding to the digits from 0 to 9.}}}
\label{fig:tSNE_B}
\vspace{-0.3cm}
\end{figure*}
For each dataset, we can observe clearly separated clusters. The ground truth labels nicely match the t-SNE datapoints gathering, highlighting the ability of {\tt SC-EDAE} to separate data according to the underlying classes. As already noticed in~\cite{maaten2008visualizing}, the t-SNE results obtained from the {\tt SC-EDAE} ensemble affinity matrix reflects the local structure of the data, such as the orientation of the ones, by showing elongated clusters (e.g., Fig.~\ref{fig:tSNE_B}, red cluster).
%

%
\section{Conclusion}\label{sec:discussion}
\label{sec:conclusion}

We report in this paper a novel clustering method that combines the advantages of deep learning, spectral clustering and ensemble strategy.
Several studies have proposed to associate, either sequentially or jointly, deep architecture and classical clustering methods to improve the partitioning
of large datasets. However, these methods are usually confronted to important issues related to well known challenges with neural networks, such as weight initialization
or structure settings. Our {\tt SC-EDAE} approach alleviates these issues by exploiting an ensemble procedure to combine several deep models before applying a spectral clustering; it is quite simple and can be framed in three steps:
\begin{itemize}
\item generate $m$ deep embeddings from the original data,
\item construct a sparse and low-dimensional ensemble affinity matrix based on anchors strategy,
\item apply spectral clustering on the common space shared by the $m$ encoding.
\end{itemize}
The experiments on real and synthetic datasets demonstrate the robustness and high performance of {\tt SC-EDAE} on image datasets. 
{\tt SC-EDAE} can be used in different versions with an ensemble on weights initialization, epoch numbers or deep architectures.
These variants provide higher accuracy, ARI and NMI results than state-of-the art methods. Most importantly, the high performance
of {\tt SC-EDAE} is obtained {\it without} any deep models pretraining.  

The proposed method also benefits from the anchors strategy. The anchors provide a sparse and low-dimensional
ensemble affinity matrix that ensures an efficient spectral clustering. As a complementary improvement, one could easily
implements the parallelization of the $m$ encodings computation in the first step of the {\tt SC-EDAE} procedure. Our experiments show that few 
different encodings already lead to significant performance improvement, yet more complex datasets could require larger amount
of various encodings, and such parallelization would facilitate the {\tt SC-EDAE} use. 

\pagebreak
\appendix
\section{Appendix}
\subsection{Supplementary experiments on synthetic data}
\label{sec:sup_exp_synth}
As proposed in~\cite{YangFSH17}, we provide two complementary examples of clustering with {\tt SC-EDAE} that demonstrate the ability of the $\mathbf{B}$ embeddings to correctly recover the underlying classes of a given dataset. We first consider the following two transformations, $\bx_i = \sigma(\sigma(\mathbf{W}\bh_i))^2$ and $\bx_i = \tan(\sigma(\mathbf{W}\bh_i))$.
The Figure \ref{fig:FCPS_synth10_and_11} shows the two first embeddings of $\bB$ obtained with the transformed data. This representation highlights the separability power of {\tt SC-EDAE}. The corresponding accuracy is $1.00$ for {\tt Tetra}, {\tt Chainlink} and {\tt Lsun}. For both supplementary transformation, we can observe patterns that are similar to clusters presented in the main text (Fig.~\ref{fig:FCPS_synth9}).
%
\begin{figure*}[!h]
\centering
\setlength\tabcolsep{1pt}

\begin{tabular}[t]{cccc}
 & \footnotesize{\bf Tetra} & \footnotesize{\bf Chainlink} & \footnotesize{\bf Lsun}\\
 \vspace{-0.cm}
$\scriptstyle \bx_i = \sigma(\sigma(\mathbf{W}\bh_i))^2$ & \makecell{\\ \includegraphics[height=3.2cm,width=3.2cm]{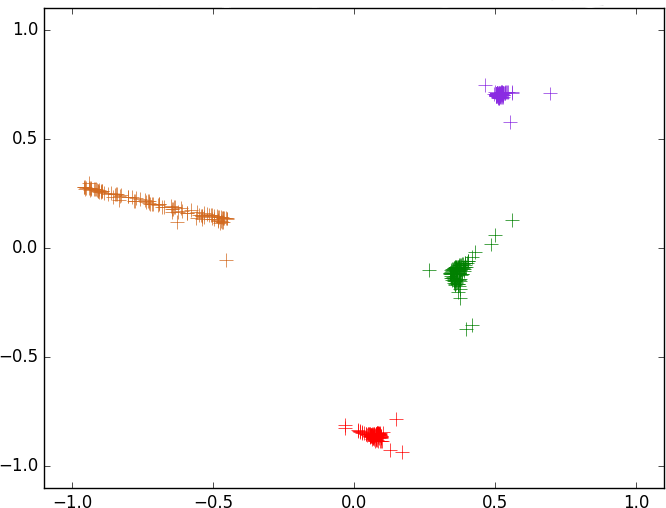} \\ } & \makecell{\\\includegraphics[height=3.2cm,width=3.2cm]{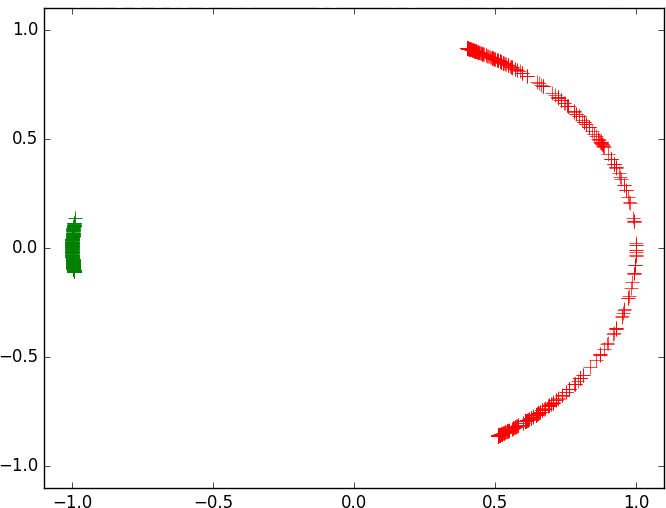} \\} & \makecell{\\ \includegraphics[height=3.2cm,width=3.2cm]{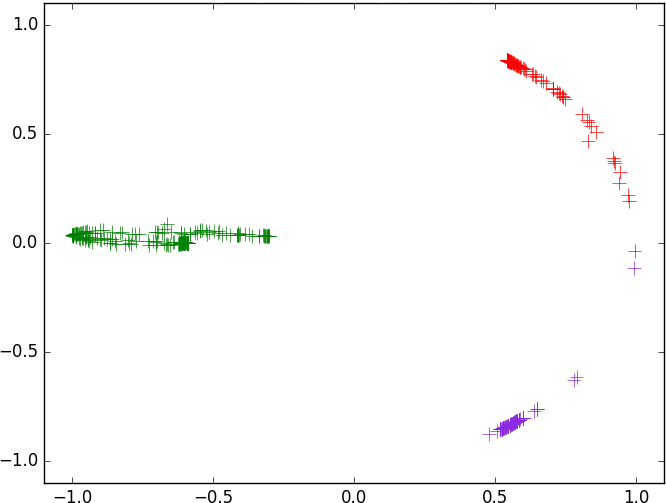} \\}\\
$\scriptstyle \bx_i = \tan(\sigma(\mathbf{W}\bh_i)) $ & \makecell{\\ \includegraphics[height=3.2cm,width=3.2cm]{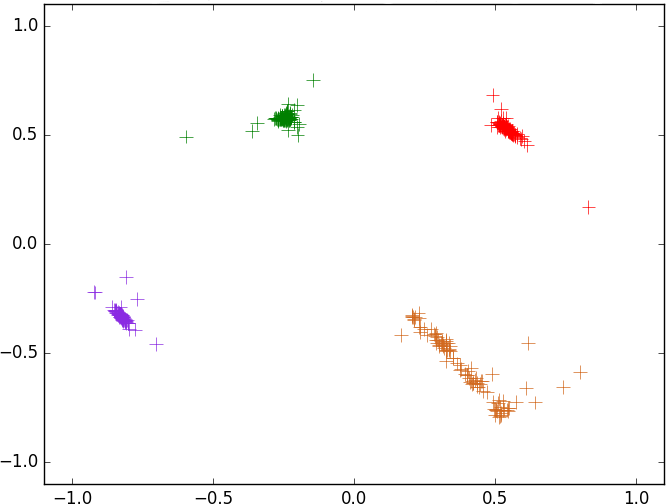} \\ } & \makecell{\\\includegraphics[height=3.2cm,width=3.2cm]{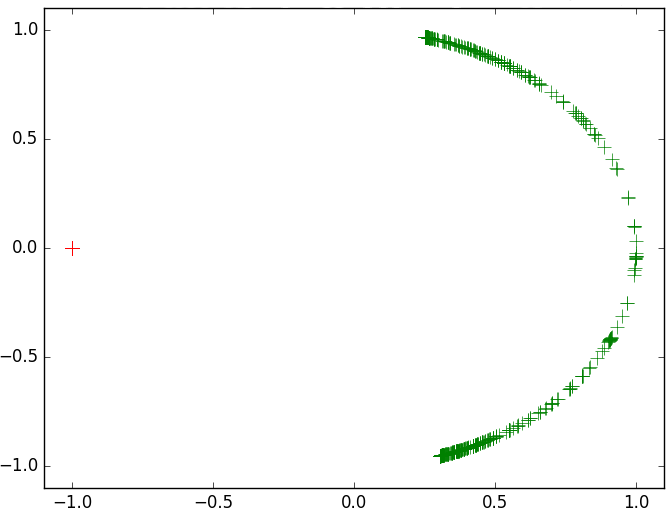} \\} & \makecell{\\ \includegraphics[height=3.2cm,width=3.2cm]{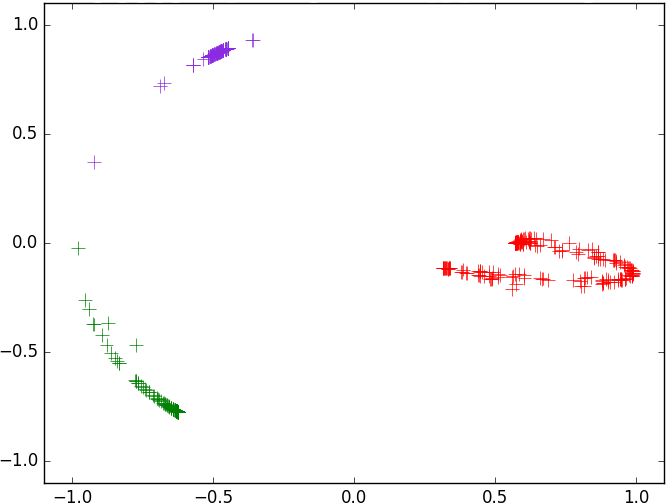} \\}\\
\end{tabular}

\caption{{\bf Embeddings $\bB$ from {\tt SC-EDAE} on {\tt Tetra}, {\tt Chainlink} and {\tt Lsun} high-dimensional datasets:} \footnotesize{
Colors indicate the predicted labels. 
}}
\label{fig:FCPS_synth10_and_11}
\end{figure*}

\vspace{-0.1cm}
\subsection{Complementary experiments on real data}
\label{sec:sup_exp_real}
\subsubsection{Baseline evaluations}
The Table~\ref{tab:ref_ari_nmi} provides complementary results for the baseline evaluations on real datasets. Specifically, it gives the mean Adjusted Rand Index (ARI) and the Normalized Mutual Information (NMI) for \emph{LSC} and \emph{kmeans}$_{++}$. The mean is taken over $10$ replicates on the original datasets, over all epoch and landmark numbers. The results for {\it DAE-LSC} and {\it DAE-kmeans$_{++}$} are averaged over 50 replicates (10 replicates on each of the 5 encodings per DAE structure type), over all epoch and landmark numbers. These results follow the same trend as the accuracy results detailed in main text.

\begin{table*}[!h]
\centering
\scriptsize
\caption{{\bf Mean clustering Adjusted Rand Index (ARI) and Normalized Mutual Information (NMI) for {\tt LSC} and {\tt k-means} on original real datasets and encodings.} {\footnotesize{Evaluations on MNIST, PenDigits, USPS data and their encodings. Bold values highlight the higher results}}}
\setlength\tabcolsep{0.5pt}
\begin{tabular}{|c|c|c|c|c|l|l|l|l|l| }
\cline{2-5}
\cline{7-10}
\multicolumn{1}{c|}{} & \multicolumn{2}{c|}{\small{\bf ARI}} & \multicolumn{2}{c|}{\small{\bf NMI}} & \multicolumn{1}{c|}{} & \multicolumn{2}{c|}{\small{\bf ARI}} & \multicolumn{2}{c|}{\small{\bf NMI}}\\
\hline
\multicolumn{1}{|c|}{\tiny{Data}} & 
\multicolumn{1}{c|}{\tiny{\it LSC}} & 
\multicolumn{1}{c|}{\tiny{\it kmeans$_{++}$}} & \multicolumn{1}{c|}{\tiny{\it LSC}} & 
\multicolumn{1}{c|}{\tiny{\it kmeans$_{++}$}} & \multicolumn{1}{c|}{\tiny{DAE structure}} & \multicolumn{1}{c|}{\tiny{\it DAE-LSC}} & \multicolumn{1}{c|}{\tiny{\it DAE-kmeans$_{++}$}} &
\multicolumn{1}{c|}{\tiny{\it DAE-LSC}} & \multicolumn{1}{c|}{\tiny{\it DAE-kmeans$_{++}$}}\\
\hline
\multirow{6}{*}{\footnotesize{\rotatebox{90}{MNIST}}} & 
\multirow{6}{*}{\rotatebox{90}{{\bf 54.86} $\pm 1.69 $}} & 
\multirow{6}{*}{\rotatebox{90}{39.98 $\pm 0.03$}} &
\multirow{6}{*}{\rotatebox{90}{{\bf 70.54} $\pm 0.83 $}} & \multirow{6}{*}{\rotatebox{90}{52.89 $\pm 0.02$}} &
          500--750--1000 & 78.16 $\pm 10.26 $ & 63.58 $\pm 8.14 $ & 80.88 $\pm 6.58 $ & 70.29 $\pm 5.38 $\\
& & & & & 500--1000--750 & 82.84 $\pm 1.20 $ & 67.66 $\pm 6.36 $ & 84.04 $\pm 1.20 $ & 73.21 $\pm 4.02 $\\
& & & & & 750--500--1000 & 79.20 $\pm 8.42 $ & 65.32 $\pm 7.36 $ & 81.57 $\pm 5.55 $ & 71.50 $\pm 4.64 $\\
& & & & & 750--1000--500 & 82.23 $\pm 6.33 $ & 66.75 $\pm 6.48 $ & 83.52 $\pm 4.13 $ & 72.32 $\pm 4.10 $\\
& & & & & 1000--500--750 & {\bf 83.66} $\pm 4.23 $ & 67.48 $\pm 5.80 $ & {\bf 84.29} $\pm 2.80 $ & 72.80 $\pm 3.52 $\\
& & & & & 1000--750--500 & 83.15 $\pm 4.81$ & 65.28 $\pm 7.48 $ & 84.07 $\pm 2.99$ & 71.69 $\pm 3.09 $\\
\hline
\multirow{6}{*}{\footnotesize{\rotatebox{90}{PenDigits}}} & 
\multirow{6}{*}{\rotatebox{90}{{\bf 68.58} $\pm 3.79 $}} & \multirow{6}{*}{\rotatebox{90}{57.58 $\pm 2.61 $}} &
\multirow{6}{*}{\rotatebox{90}{{\bf 79.78} $\pm 1.42 $}} & \multirow{6}{*}{\rotatebox{90}{69.72 $\pm 0.58 $}} &
          500--750--1000 & {\bf 74.12} $\pm 2.53 $ & 59.62 $\pm 3.79 $ & {\bf 81.06} $\pm 1.43 $ & 69.33 $\pm 2.11 $\\
& & & & & 500--1000--750 & 73.18 $\pm 3.55 $ & 58.97 $\pm 3.41 $ & 80.46 $\pm 1.85 $ & 69.14 $\pm 1.99 $\\
& & & & & 750--500--1000 & 73.47 $\pm 3.12 $ & 58.23 $\pm 3.73 $ & 80.55 $\pm 1.48 $ & 68.56 $\pm 2.25 $\\
& & & & & 750--1000--500 & 73.30 $\pm 3.17 $ & 58.82 $\pm 3.73 $ & 80.38 $\pm 1.62 $ & 68.74 $\pm 2.07 $\\
& & & & & 1000--500--750 & 73.07 $\pm 2.97 $ & 58.92 $\pm 3.35 $ & 80.23 $\pm 1.79 $ & 69.53 $\pm 2.23 $\\
& & & & & 1000--750--500 & 73.40 $\pm 3.17 $ & 58.16 $\pm 3.12 $ & 80.66 $\pm 1.60 $ & 68.83 $\pm 1.90 $\\
\hline
\multirow{6}{*}{\footnotesize{\rotatebox{90}{USPS}}} & 
\multirow{6}{*}{\rotatebox{90}{{\bf 77.09} $\pm 1.52 $}} &
\multirow{6}{*}{\rotatebox{90}{57.70 $\pm 0.12$}} &
\multirow{6}{*}{\rotatebox{90}{{\bf 79.48} $\pm 0.90 $}} &
\multirow{6}{*}{\rotatebox{90}{65.67 $\pm 0.10$}} &
          500--750--1000 & 76.12 $\pm 8.45 $ & 63.62 $\pm 3.02 $ & 80.32 $\pm 4.89 $ & 70.35 $\pm 2.25 $\\
& & & & & 500--1000--750 & 77.34 $\pm 7.71 $ & 64.22 $\pm 3.34 $ & 80.69 $\pm 4.30 $ & 70.37 $\pm 2.16 $\\
& & & & & 750--500--1000 & 73.66 $\pm 6.38 $ & 63.34 $\pm 2.67 $ & 78.77 $\pm 3.81 $ & 70.11 $\pm 1.77 $\\
& & & & & 750--1000--500 & 75.17 $\pm 5.23 $ & 64.87 $\pm 2.66 $ & 80.13 $\pm 3.11 $ & 70.94 $\pm 2.03 $\\
& & & & & 1000--500--750 & 76.15 $\pm 4.29 $ & 64.63 $\pm 2.02 $ & 80.98 $\pm 2.12 $ & 70.80 $\pm 1.36 $\\
& & & & & 1000--750--500 & {\bf 77.75} $\pm 5.02 $ & 63.88 $\pm 2.07 $ & {\bf 81.74} $\pm 2.12 $ & 70.33 $\pm 1.45 $\\
\hline
\end{tabular}
\label{tab:ref_ari_nmi}
\end{table*}

\subsubsection{SC-EDAE ensemble evaluations}

The Table~\ref{tab:ens_ari_nmi} provides complementary results for the ensemble evaluations on real datasets. Specifically, it gives the mean Adjusted Rand Index (ARI) and the Normalized Mutual Information (NMI) for {\tt SC-EDAE}. The mean is taken over $10$ replicates on the encodings. The columns {\it Ens.Init.} and {\it Ens.Ep.} indicate the results for an ensemble approach on the DAE weight initializations ({\it Ens.Init.}, $m=5$) and the DAE training epoch numbers ({\it Ens.Ep.}, $m=5$). The column {\it Ens.Struct.} provides the evaluations for an ensemble approach on various DAE structure types ($m=6$).

\begin{table*}[!h]
\centering
\scriptsize
\caption{{\bf Mean clustering Adjusted Rank Index (ARI) and Normalized Mutual Information (NMI) for the {\tt SC-EDAE} algorithm.} The ensemble is done on initializations, epochs number and structures. Bold values highlight the higher results.}
\setlength\tabcolsep{0.9pt}
\begin{tabular}{|c|l|l|l|l|l|l|l| }
\cline{3-8}
\multicolumn{2}{c|}{\small{}} & \multicolumn{3}{c|}{\small{\bf ARI}} & \multicolumn{3}{c|}{\small{\bf NMI}}\\
\hline
\multicolumn{1}{|c|}{\scriptsize{Data}} & \multicolumn{1}{c|}{\scriptsize{DAE structure}} & \multicolumn{1}{c|}{\scriptsize{\it Ens.Init.}} & \multicolumn{1}{c|}{\scriptsize{\it Ens.Ep.}} & \multicolumn{1}{c|}{\scriptsize{\it Ens.Struct.}} & \multicolumn{1}{c|}{\scriptsize{\it Ens.Init.}} & \multicolumn{1}{c|}{\scriptsize{\it Ens.Ep.}} & \multicolumn{1}{c|}{\scriptsize{\it Ens.Struct.}}\\
\hline
\multirow{6}{*}{\footnotesize{\rotatebox{90}{MNIST}}} & 
   500--750--1000 & 81.87 $\pm 0.49$ & 83.22 $\pm 7.07 $ & \multirow{6}{*}{87.25 $\pm 3.88 $} & 84.69 $\pm 0.28$ & 85.44 $\pm 4.22 $ & \multirow{6}{*}{87.93 $\pm 2.27 $}\\
 & 500--1000--750 & {\bf 90.17} $\pm 0.14$ & 88.84 $\pm 3.93 $ & & {\bf 89.59} $\pm 0.10$ & 88.87 $\pm 2.32 $ &\\
 & 750--500--1000 & 84.66 $\pm 1.71$ & 85.29 $\pm 5.18 $ & & 86.86 $\pm 0.21$ & 86.68 $\pm 3.06 $ &\\
 & 750--1000--500 & 86.18 $\pm 0.20$ & 85.86 $\pm 4.89 $ & & 87.44 $\pm 0.13$ & 87.17 $\pm 2.66 $ &\\
 & 1000--500--750 & 88.47 $\pm 0.27$ & 88.86 $\pm 2.49 $ & & 88.53 $\pm 0.17$ & 88.71 $\pm 1.45 $ &\\
 & 1000--750--500 & 88.59 $\pm 0.38$ & {\bf 90.02} $\pm 1.13$ & & 88.81 $\pm 0.18$ & {\bf 89.44} $\pm 0.81$ &\\
 \hline
\multirow{6}{*}{\footnotesize{\rotatebox{90}{PenDigits}}} & 
   500--750--1000 & {\bf 75.67} $\pm 0.80$ & 76.15 $\pm 1.23 $ & \multirow{6}{*}{74.88 $\pm 1.57$} & {\bf 82.33} $\pm 0.48$ & 82.73 $\pm 0.78 $ & \multirow{6}{*}{81.87 $\pm 0.84$}\\
 & 500--1000--750 & 74.00 $\pm 0.88$ & 74.83 $\pm 1.82 $ & & 80.96 $\pm 0.43$ & 81.68 $\pm 0.99 $ &\\
 & 750--500--1000 & 75.13 $\pm 0.95$ & 75.71 $\pm 0.81$ & & 81.89 $\pm 0.48$ & {\bf 82.19} $\pm 0.51$ &\\
 & 750--1000--500 & 74.98 $\pm 1.20 $ & 75.38 $\pm 1.18 $ & & 81.88 $\pm 0.61 $ & 81.99 $\pm 0.73 $ &\\
 & 1000--500--750 & 75.07 $\pm 0.69$ & {\bf 75.63} $\pm 0.89$ & & 81.86 $\pm 0.39$ & 82.14 $\pm 0.70$ &\\
 & 1000--750--500 & 75.16 $\pm 0.10$ & 75.60 $\pm 0.76$ & & 81.97 $\pm 0.52$ & 82.13 $\pm 0.57$ &\\
\hline
\multirow{6}{*}{\footnotesize{\rotatebox{90}{USPS}}} & 
   500--750--1000 & 75.68 $\pm 1.80 $ & 76.85 $\pm 5.37 $ & \multirow{6}{*}{77.61 $\pm 3.69 $} & 81.93 $\pm 0.81 $ & 82.39 $\pm 3.04 $ & \multirow{6}{*}{83.17 $\pm 1.96 $}\\
 & 500--1000--750 & 76.12 $\pm 0.71$ & 77.67 $\pm 3.63 $ & & 82.29 $\pm 0.39$ & 83.03 $\pm 1.93 $ &\\
 & 750--500--1000 & 74.32 $\pm 1.02 $ & 76.53 $\pm 3.74 $ & & 81.69 $\pm 0.48 $ & 82.07 $\pm 1.87 $ &\\
 & 750--1000--500 & 75.33 $\pm 0.93 $ & 75.70 $\pm 2.82 $ & & 82.34 $\pm 0.43 $ & 82.35 $\pm 1.59 $ &\\
 & 1000--500--750 & 79.93 $\pm 1.64 $ & 77.73 $\pm 3.00 $ & & 84.28 $\pm 0.65 $ & 83.58 $\pm 1.37 $ &\\
 & 1000--750--500 & {\bf 80.96} $\pm 1.99 $ & {\bf 80.79} $\pm 3.21 $ & & {\bf 84.75} $\pm 0.78 $ & {\bf 84.75} $\pm 1.47 $ &\\
 \hline
\end{tabular}
\label{tab:ens_ari_nmi}
\end{table*}

\newpage
\section*{References}

\bibliography{mybibfile}

\end{document}